\documentclass[11pt]{article}

\usepackage[a4paper,margin=1in]{geometry}
\usepackage{amsmath,amssymb,amsthm}
\usepackage{enumitem}
\usepackage{xcolor}
\usepackage{hyperref}
\usepackage{booktabs}
\usepackage{array}
\usepackage{graphicx}
\usepackage{tikz}
\usetikzlibrary{positioning, arrows.meta, calc, shapes}
\usepackage[T1]{fontenc}
\hypersetup{
  colorlinks=true,
  linkcolor=blue!50!black,
  citecolor=blue!50!black,
  urlcolor=blue!50!black,
}

\newtheorem{theorem}{Theorem}[section]
\newtheorem{proposition}[theorem]{Proposition}

\theoremstyle{definition}
\newtheorem{definition}[theorem]{Definition}

\theoremstyle{remark}
\newtheorem{remark}[theorem]{Remark}

\newcommand{\R}{\mathbb{R}}
\newcommand{\E}{\mathbb{E}}
\newcommand{\set}[1]{\left\{#1\right\}}
\newcommand{\norm}[1]{\left\|#1\right\|}
\newcommand{\abs}[1]{\left|#1\right|}
\newcommand{\inner}[2]{\langle #1, #2\rangle}
\newcommand{\Uad}{\mathcal{U}_{\mathrm{ad}}}

\newcommand{\LN}{\mathrm{LN}}
\newcommand{\Lip}{\mathrm{Lip}}

\graphicspath{{figures/}}

\title{Sharp Stability Threshold and Certification for Designing Stable Residual Architectures
}
\author{%
\begin{tabular}{c@{\hspace{4em}}c}
Hyemin Gu & Michael Tyrrell \\
\normalsize University of Massachusetts Amherst & \normalsize SRI International \\
\texttt{hgu@umass.edu} & \texttt{michael.tyrrell@sri.com} \\[1.2em]
Tuhin Sahai & Markos A.~Katsoulakis \\
\normalsize SRI International & \normalsize University of Massachusetts Amherst \\
\texttt{tuhin.sahai@sri.com} & \texttt{markos@umass.edu}
\end{tabular}
}
\date{\today}

\begin{document}
\maketitle

\begin{abstract}
We propose \emph{the sublinear-growth principle} for deep residual
architectures --- a sharp stability threshold on the input-magnitude
exponent of every residual block's velocity field:
\begin{equation*}
  \norm{v(x, t)} \le c\,\norm{x}^q + b, \qquad q \in [0, 1].
\end{equation*}
The threshold $q = 1$ is established via two independent arguments.
Classical ODE theory gives a global forward flow on $[0, T]$ at
$q \le 1$ and exhibits divergent velocity fields at any $q > 1$. The optimal-control analysis, via the Hamilton--Jacobi--Bellman
equation, sharpens this to a selection statement:
the training optimum is bang-bang on the boundary of the admissible
class, so the optimum at $q > 1$ blows up while the optimum at
$q \le 1$ is safe by construction. The exponent criterion $q \le 1$
is thereby a necessary and sufficient condition for stable training. It clarifies architectural placements
that ensure the stability of training and inference, explaining, for
instance, the stabilizing role  of layer normalization. The
sublinear-growth velocity fields form \emph{the right function space}
on which forward dynamics, adjoint sensitivity, and architectural
composition are all well-controlled. An arithmetic of input-magnitude exponents under the five operations
that build residual blocks enables efficient certification of
$q_k \le 1$ at the level of architectural primitives, in place of ad hoc trial and error in the search for
stable neural architectural designs. A parameter-free modification reduces the supercritical Mamba block
from $q = 5$ to $q = 1$ without layer normalization, demonstrating
this point. Experiments on Mamba and PatchTST confirm that the
$q \le 1$ variants train stably: the criterion is the input-magnitude
exponent, not the presence of a normalization layer.
\end{abstract}

\paragraph{Acknowledgements.}
This material is based upon work of the authors supported by the Defense Advanced Research Projects Agency (DARPA) under Agreement No. HR00112590112. Approved for public release; distribution is unlimited.

\section{Introduction}

Normalization layers --- LayerNorm~\cite{ba2016layernorm}, RMSNorm~\cite{zhang2019rmsnorm}, BatchNorm~\cite{ioffe2015batchnorm}, GroupNorm~\cite{wu2018groupnorm}, InstanceNorm~\cite{ulyanov2016instancenorm} --- are standard components of modern deep residual architectures and appear in essentially every model across vision, language, speech, and generative modeling~\cite{vaswani2017attention,dosovitskiy2020vit,gu2024mamba}. Their primary empirical role is to stabilize training convergence: they suppress runaway activation magnitudes, regularize gradient flow, and reduce sensitivity to initialization and learning rate~\cite{santurkar2018howbn,bjorck2018understandingbn,huang2023normreview}.

The mechanism behind this stabilization has been investigated empirically~\cite{santurkar2018howbn,bjorck2018understandingbn,brock2021signalprop,huang2023normreview}, and a number of architectural alternatives that dispense with normalization have been proposed. Fixup~\cite{zhangFixup2019} and ReZero~\cite{bachlechner2021rezero} use specialized initialization to keep residual branches stable; self-normalizing networks~\cite{klambauer2017snn} use a designed activation function; NFNets~\cite{brock2021nfnets} combine weight normalization with adaptive gradient clipping; nGPT~\cite{loshchilov2024ngpt} performs learning on the hypersphere; AERO~\cite{jha2024aero} restricts the nonlinearity to softmax only; Dynamic Tanh~\cite{zhu2025dyt} replaces layer normalization with an elementwise scaled tanh function. Reached by trial and empirical observation, these works show that normalization is \emph{not unique in its stabilizing properties}: different computational actions encoded in architectural primitives can play the same role.

A theoretical understanding of stable training spans the architectural
and learning-objective aspects within continuous-time control,
mean-field-game, and Hamilton--Jacobi--Bellman (HJB) perspectives on deep learning. Haber and
Ruthotto~\cite{haber2017stable} address the architectural aspect,
formulating residual networks as ODE flows and proposing architecture
designs for which the continuous-time dynamics are well-posed. Zhang
and Katsoulakis~\cite{zhang2023mfg} address the learning-objective
aspect, casting generative-model training as MFG optimality conditions
and using the resulting HJB structure to design well-posed training
regularizers for continuous normalizing flows, score-based generative
models, and Wasserstein gradient flows.

Recent works in alignment with these aspects are by Kan et
al.~\cite{kan2025stability, kan2026optimal}. \cite{kan2026optimal}
works on the learning-objective axis, regularizing transformer
training through OC-derived terms for generalization, robustness, and
efficiency. \cite{kan2025stability} works on the architectural axis,
analyzing Pre-LN and Peri-LN \cite{kim2025peri} through the HJB perspective. Across the
architectural works, the theoretical perspective is applied to specific
architectures or their components; to the best of our knowledge, no single criterion has been identified that spans a wide
class of architectural primitives and normalization choices and
is sharp enough to guide architectural search or discovery.

We propose such a principle, one that overarches architectural
components encountered in deep residual architectures under a single
criterion --- \emph{the sublinear-growth principle}: the velocity
field of every residual block should grow at most sublinearly in the
input magnitude,
\begin{equation}
  \norm{v(x, t)} \le c\,\norm{x}^q + b, \qquad q \in [0, 1],
  \label{eq:intro-bound}
\end{equation}
with constants $c, b \ge 0$ uniform in the parameters $\theta$. The
motivation comes directly from the global-existence-versus-blow-up
dichotomy in classical ODE theory (Section~\ref{sec:formulation}): at
$q \le 1$ the continuous-time ODE $\dot X = v(X, t)$ admits a globally
defined flow on $[0, T]$, while at any $q > 1$ (the
\emph{supercritical} regime) there exist velocity fields whose forward
trajectory blows up in finite time, and the discrete forward pass
inherits this divergence by overflowing within a modest depth. The
exponent $q = 1$ is therefore the threshold of forward non-explosion,
and a residual architecture built on a $q > 1$ admissible class
carries an unavoidable risk of overflow, independently of
initialization or optimization.

The optimal-control analysis of Section~\ref{sec:lg-analysis} sharpens this
picture. Casting training as an optimal-control problem on the velocity
field $v$, pointwise maximization of the Hamiltonian on the admissible
class yields a closed-form optimal velocity field that is \emph{bang-bang} (Theorem~\ref{thm:HJ-threshold}):
at every $(X, t)$, $v^\ast$ saturates the magnitude bound
$\norm{v} \le c\,\norm{X}^q + b$. Consequently, optimization cannot dodge the
boundary of the admissible class. At $q > 1$, this boundary contains
velocity fields whose forward trajectory blows up in finite time, and the
optimum selects one of them; at $q \le 1$, the boundary is safe and the
discrete residual stack inherits explicit forward state bounds and
backward Jacobian bounds (Theorems~\ref{thm:forward-bound} and~\ref{thm:backward-optimal}), with constants uniform in the admissible
parameters. The exponent $q = 1$ is therefore the \textbf{stability threshold}
under the OC framework: both \textbf{necessary (the optimum at $q > 1$
blows up) and sufficient (forward and backward stability at
$q \le 1$)} for stable training.

Together, the necessary and sufficient conditions from
Sections~\ref{sec:formulation}--\ref{sec:lg-analysis} identify the family of velocity fields
satisfying \eqref{eq:intro-bound} as \textbf{the right function space};
we call it the \emph{sublinear-growth class}. In Section~\ref{sec:algebra} we
make the principle \eqref{eq:intro-bound} operational at the level of
architecture design by developing an arithmetic of input-magnitude
exponents under operations on architectural primitives. The
input-magnitude exponent is the vocabulary in which standard
primitives already sit --- layer normalizations such as Peri-LN,
bounded activations such as Dynamic Tanh~\cite{zhu2025dyt}, linear
layers, Lipschitz activations, and self-attention. Each carries an
input-magnitude exponent catalogued in Table~\ref{tab:primitives}, and
the five operations that combine primitives into residual blocks
transform these exponents by simple arithmetic rules
(Theorem~\ref{thm:composition}). Modern residual architectures have landed, by empirical practice, at
block-level compositions of these primitives with $q \le 1$; the most
familiar instance is wrapping a sub-block with layer normalization. We articulate this implicit, algorithmic decision as an explicit, mathematically grounded, and sharp criterion for
\textbf{architectural certification}: a single sharp threshold at
$q = 1$, certified per-block from primitive-level exponents and
between-block across the residual stack. This unifies existing case-by-case analyses and widens
the space of stable design candidates for architectural discovery.

We validate the principle \eqref{eq:intro-bound} 
on two time-series forecasting backbones:
Mamba~\cite{gu2024mamba, cmamba2024}, whose native selective state-space
operator drives the block-level exponent to $q = 5$, and
PatchTST~\cite{nie2023patchtst}, whose primitives are all $q = 1$ by
construction. For Mamba we evaluate five variants: the conventional
RMSNorm baseline, Pre-LN, and Peri-LN (three normalization wrappers that
each collapse the block to $q = 0$); a structural linear-growth
modification that reduces the block to $q = 1$ without any normalization
layer; and a no-normalization control at the native $q = 5$. For
PatchTST we evaluate the same four normalization choices, the
linear-growth modification being redundant since the block is already
$q = 1$. Across ETTm1 and Weather, the $q \le 1$ variants train stably
on every configuration, while the Mamba free-velocity control diverges
at a rate that grows sharply with depth. The pattern dissociates the
input-magnitude exponent from the presence of normalization: Mamba
without normalization is supercritical and blows up, PatchTST without
normalization is at $q = 1$ and does not.

\paragraph{Contributions.}
\begin{enumerate}[leftmargin=*,topsep=2pt,itemsep=2pt]
  \item As a design principle for the velocity field of every residual block,
we propose the sublinear-growth bound $\norm{v(x, t)} \le c\norm{x}^q + b$
with $q \in [0, 1]$, and identify $q = 1$ as a sharp stability threshold, via two independent arguments: classical non-explosion of the ODE
(Section~\ref{sec:formulation}) and an optimal-control argument that
identifies the training optimum as bang-bang on the boundary of the
admissible class (Section~\ref{sec:lg-analysis}).

  \item We derive forward state bounds and backward Jacobian bounds for the
discrete residual architecture below the threshold ($q \le 1$), sharpened at the
optimal-control optimum
(Section~\ref{sec:lg-fwd-bwd}).

  \item We make the principle operational at the level of architecture
design by developing an arithmetic of input-magnitude exponents under
five operations on primitives. Closure of the set of primitives
under these operations (Theorem~\ref{thm:composition}) reduces
certification of $q_k \le 1$ for a candidate block to inspection
of its primitives, catalogued in Table~\ref{tab:primitives}
(Section~\ref{sec:algebra}).

 \item We rectify the supercritical Mamba block as a demonstrative example of this
arithmetic: an elementary normalization-free modification reduces its
input-magnitude exponent from $q = 5$ to $q = 1$ without any learned
parameters (Section~\ref{sec:validation}).

\end{enumerate}

\section{ODE View of Residual Architectures}
\label{sec:formulation}
 
A residual architecture is viewed throughout as a time discretization of the continuous-time ODE generated by a velocity field $v$. Under this view, divergence of the forward trajectory is recast as finite-time blow-up of the ODE solution, and classical theory says exactly what prevents it: a growth condition on $v$, hence on the residual block. That condition is the object of this section. Classical theory is available in continuous time; the discrete-time counterpart of each statement is recorded alongside it below.
 
\paragraph{Discrete and continuous-time dynamics.}
Let $X_k$ denote the hidden state at depth $k$, taking values in a finite-dimensional Euclidean space $\R^N$ ($N = dn$ for sequence
models with $d$ channels and $n$ tokens, via vectorization) with Euclidean inner product $\inner{\cdot}{\cdot}$ and induced norm $\norm{\cdot}$. A  residual architecture evolves $X_k$ through depth $D$ layers via
\begin{equation}
  X_{k+1} = X_k + \Delta t \cdot v(X_k; \theta_k), \qquad k = 0, 1, \dots, D-1,
  \label{eq:discrete-dynamics}
\end{equation}
where $\theta_k$ are the layer-$k$ parameters, $v$ is the
\emph{velocity field}, and $\Delta t > 0$ is the \emph{step size}
(equivalently, the \emph{residual scaling}; $\Delta t = 1$ is the
standard choice). We refer to
$v(X_k; \theta_k)$ interchangeably as the velocity at depth $k$ and as
the depth-$k$ \emph{residual block}, depending on whether the ODE or
the architectural perspective is foregrounded. The continuous-time limit $\Delta t \to 0$, $D \to \infty$ with $T = D\Delta t$ fixed yields the ODE
\begin{equation}
  \frac{dX(t)}{dt} = v(X(t), t; \theta(t)), \qquad X(0) = X_0, \qquad t \in [0, T]
  \label{eq:continuous-dynamics}
\end{equation}
with continuous-time velocity field $v$.
We refer to \eqref{eq:discrete-dynamics} as the \emph{forward trajectory} of the network and to \eqref{eq:continuous-dynamics} as its continuous-time limit, as in the neural ODE~\cite{chen2018neuralode}
view of residual networks used here as an analytical limit. The mapping from inputs to outputs at depth $D$ is the flow $\Phi_D : X_0 \mapsto X_D$.

\subsection{Non-explosion via classical ODE theory}
\label{sec:ode-classical}
 
The ODE perspective establishes non-explosion of the \emph{forward trajectory} of the network: existence of a global solution $X(t)$ on $[0, T]$ at each fixed velocity field. We recall the classical conditions under which \eqref{eq:continuous-dynamics} admits such a solution, formulated on the Euclidean space $\R^N$ with norm $\norm{\cdot}$.
 
\begin{definition}[Sublinear and linear growth conditions]
\label{def:lg-condition}
A velocity field $v : \R^N \times [0, T] \to \R^N$ satisfies a \emph{sublinear-growth condition} with exponent $q \in [0, 1]$ if there exist constants $c, b \ge 0$ such that
\begin{equation}
  \norm{v(x, t)} \le c\norm{x}^q + b \qquad \text{for all } (x, t) \in \R^N \times [0, T].
  \label{eq:lg-classical}
\end{equation}
The case $q = 1$ is the \emph{linear-growth condition} of classical ODE theory; the cases $q \in [0, 1)$ are strictly sublinear.
\end{definition}

\paragraph{Sublinear-growth threshold for non-explosion.}
Condition~\eqref{eq:lg-classical} identifies $q = 1$ as the threshold below which ODE solutions do not explode, by two well-known classical facts.
\begin{itemize}
  \item \emph{Global existence of solutions at $q \le 1$.} If $v$ satisfies
\eqref{eq:lg-classical} with $q = 1$ and $c > 0$, and is continuous in
$(x, t)$, then for every $X_0 \in \R^N$ the
ODE~\eqref{eq:continuous-dynamics} admits a global solution on $[0, T]$,
and any such solution satisfies the Gr\"onwall bound
\begin{equation}
 \norm{X(t)} \le e^{ct}\norm{X_0} + (b/c)(e^{ct} - 1).
  \label{eq:gronwall-bound}
\end{equation}

Strictly sublinear $q \in [0, 1)$ gives the same global existence with a
sub-exponential (polynomial) growth bound, via Peano's existence theorem
and a scalar comparison argument~\cite{hartman1964ode, coddington1955ode}.
Taking $c = 0$ in \eqref{eq:lg-classical} at any $q$ leaves $\norm{v} \le b$,
an instance of the bounded-velocity case $q = 0$, for which the bound is
$\norm{X(t)} \le \norm{X_0} + bt$.

  \item \emph{Finite-time blow-up at $q > 1$.} For every $q > 1$ and every $c > 0$, the velocity field $v(x) := c\norm{x}^{q-1} x$ obeys $\norm{v(x)} \le c\norm{x}^q$ and reduces \eqref{eq:continuous-dynamics} to the scalar ODE $\dot r = c r^q$ for $r(t) := \norm{X(t)}$, whose explicit solution blows up at
  \begin{equation}
    t^\ast = \frac{\norm{X_0}^{1-q}}{c(q-1)}.
    \label{eq:blowup-time}
  \end{equation}
\end{itemize}

Together, the two facts identify $q = 1$ as the threshold input-magnitude exponent at which the continuous-time flow $\Phi_T$ is globally defined on $[0, T]$: every $q \le 1$ admits a global flow, while $q > 1$ admits velocity fields whose flow is not defined on the whole interval. The discrete recursion \eqref{eq:discrete-dynamics} inherits a corresponding behavior, examined next.

\paragraph{Consequence of finite-time blow-up in discrete-time.}
Within the regime $c > 0$ of Definition~\ref{def:lg-condition}, the forward-Euler recursion \eqref{eq:discrete-dynamics} stays below its continuous-time limit \eqref{eq:continuous-dynamics}, so continuous-time analysis is conservative for the discrete
recursion. This conservativeness is not, however, a practical benefit
at $q > 1$. Concretely, the velocity $v(x) = c \norm{x}^{q-1} x$ reduces \eqref{eq:discrete-dynamics} to the scalar recursion $r_{k+1} = r_k + \Delta t\, c\, r_k^q$ for $r_k := \norm{X_k}$, the forward-Euler discretization of $\dot r = c r^q$; its per-step ratio $r_{k+1}/r_k = 1 + \Delta t\, c\, r_k^{q-1}$ is itself increasing in $r_k$, so the iterates grow super-exponentially and any fixed numerical range is exhausted within a modest depth, as Figure~\ref{fig:state-overflow}
illustrates for exponents $q \in \set{2, 5}$.

\paragraph{From non-explosion to architectural design.}
The consequence at the architectural level is direct: any residual architecture whose residual block has $q > 1$ uniformly in $\theta$ inherits the risk of forward blow-up. We therefore design or modify residual blocks so that the block-level input-magnitude exponent satisfies $q \le 1$. Standard residual blocks are built from a set of architectural primitives (Table~\ref{tab:primitives}) combined by serial composition or parallel addition. The GPT-2 block~\cite{radford2019gpt2code} composes layer normalization, self-attention, and an MLP in sequence; the Hymba block~\cite{dong2024hymba} sums attention and state-space heads at the same depth. The arithmetic of Section~\ref{sec:algebra} then computes the block-level $q$ from those of the constituent primitives, enabling principled exploration of the architecture design space at $q \le 1$ in place of ad hoc trial and error.

\section{Optimal Control of Residual Architectures}
\label{sec:lg-analysis}
 
Section~\ref{sec:formulation} flagged finite-time blow-up of velocity
fields at input-magnitude exponent $q > 1$ through the ODE perspective, but
did not close the story: whether this danger actually constrains
training depends on where the OC optimum sits. This section closes the
loop through the HJB framework. The OC
optimum is \emph{bang-bang}: it saturates the magnitude bound on the
velocity field. Optimization therefore cannot dodge the dangerous
velocities identified by the ODE analysis; the architecture must not admit $q_k > 1$ at all depth $k$ for stable training.

\paragraph{Discrete and continuous-time learning problem.}
Alongside the forward trajectory \eqref{eq:discrete-dynamics}, training the
residual architecture solves the optimization
\begin{equation}
  \min_{\theta}\, \E_{(X_0, y) \sim \rho_0}\, G(X_D, y) \quad
  \text{subject to } \eqref{eq:discrete-dynamics},
  \label{eq:learning-discrete}
\end{equation}
with terminal cost $G : \R^N \times \mathcal{Y} \to \R$ evaluated at the
depth-$D$ output $X_D$ and the label $y$. This is the training loss on the
network's task-specific output, typically a small prediction or
classification head applied to $X_D$. The parameter set is
$\theta = (\theta_0, \ldots, \theta_{D-1})$: each $\theta_k$ parameterizes
the depth-$k$ block $v(\cdot;\theta_k)$ as a composition of architectural
primitives, and the resulting per-block velocity fields form the
admissible class $\Uad$. Forward evaluation of \eqref{eq:discrete-dynamics}
is the forward pass of training; the optimization is typically conducted
by backpropagation, which transports the discrete adjoint
$\lambda_k := \nabla_{X_k}\, G(X_D, y)$ backward in depth via the chain
rule
\begin{equation}
  \lambda_k = \bigl(I + \Delta t\, \nabla_X v(X_k; \theta_k)\bigr)^{\!\top}
  \lambda_{k+1},
  \qquad
  \lambda_D = \nabla_{X_D}\, G(X_D, y),
  \label{eq:discrete-adjoint}
\end{equation}
with the parameter gradient assembled along the way as
$\lambda_{k+1}^{\!\top}\, \nabla_{\theta_k} v(X_k; \theta_k)$. In the
continuous-time limit this forward--backward procedure is the 
\emph{adjoint sensitivity
method}~\cite{pontryagin1962}, formulated for neural networks as the
neural ODE~\cite{chen2018neuralode}: the optimization becomes
\begin{equation}
  \min_{v \in \Uad}\, \E_{(X_0, y) \sim \rho_0}\, G(X(T), y) \quad
  \text{subject to } \frac{dX(t)}{dt} = v(X(t), t),\ X(0) = X_0,
  \label{eq:learning-continuous}
\end{equation}
with $v$ ranging over the admissible class $\Uad$, and the backward
dynamics is governed by the adjoint equation
\begin{equation}
  \frac{d\lambda(t)}{dt} = -\bigl(\nabla_X v(X(t), t)\bigr)^{\!\top}
  \lambda(t),
  \qquad
  \lambda(T) = \nabla_X G(X(T), y),
  \label{eq:neural-ode-adjoint}
\end{equation}
solved backward in $t$ from $T$ to $0$.

Ideally, both the forward and backward dynamics are well-posed via the
choice of admissible class $\Uad$, agreeing in both discrete and
continuous-time formulations. Section~\ref{sec:lg-setup} analyzes the
OC optimum on a parametric admissible class with input-magnitude
exponent $q \ge 0$, deriving from the bang-bang nature of the optimum
the stability threshold $q \le 1$ for architectural design.

\subsection{Optimal-control analysis of the admissible class $\Uad^q$}
\label{sec:lg-setup}

We now specialize the admissible set $\Uad$ to the parametric class
\begin{equation}
  \Uad^q(X) := \set{v : \norm{v} \le c\norm{X}^q + b}
  \label{eq:Uad-generic}
\end{equation}
at input-magnitude exponent $q \ge 0$. A member $v$ of this class is a depth-$k$ residual block ($t = k\Delta t$) composed from architectural primitives (attention, normalization, MLP, gating, activations); choosing the primitives is the choice of control. 
The optimum $v^\ast$ admits a pointwise characterization. This leaves two structural questions open about $v^\ast$ as a globally defined feedback law. First, does its forward trajectory stay finite on $[0, T]$? Second, does backpropagation through this dynamics produce a well-defined gradient signal --- the gradient of a function of state and time, rather than only a sequence of numerical Jacobian--vector products \eqref{eq:discrete-adjoint}? Both are settled by the Hamilton--Jacobi--Bellman framework: the value function of the OC problem satisfies the HJB equation~\eqref{eq:HJB-PDE}, and $q \le 1$ is \emph{necessary and sufficient} for this equation to be well-posed on $[0, T]$: at $q > 1$ the bang-bang trajectory blows up in finite time, and the linear-growth hypothesis on the Hamiltonian $H_q$ that underlies the HJB argument no longer holds (necessary); at $q \le 1$ linear growth holds, and the bang-bang optimum is a globally defined feedback law with bounded forward trajectory (sufficient).

\paragraph{Hamiltonian formulation and optimal-velocity.}
The Hamiltonian associated with $\Uad(X)$ is
\begin{equation}
  H(X, P) := \sup_{v \in \Uad(X)}\set{-\inner{P}{v}},
  \label{eq:Hamiltonian-def}
\end{equation}
where $P$ is the costate. In
continuous time, the value function of the OC
problem~\eqref{eq:learning-continuous} is
$u_y(X, t) := \inf_{v \in \Uad}\set{G(X(T), y) \mid X(t) = X}$
and satisfies the Hamilton--Jacobi--Bellman (HJB) equation
\begin{align}
  -\partial_t u_y(X, t) + H(X, \nabla u_y(X, t)) &= 0,
    \quad (X, t) \in \R^{d \times n} \times [0, T), \label{eq:HJB-PDE} \\
  u_y(X, T) &= G(X, y), \nonumber
\end{align}
in the viscosity sense~\cite{crandall1992user, tran2021hje}. In
discrete time, the value function of the OC
problem~\eqref{eq:learning-discrete} is $u_k(X)$ at depth $k$, the
optimal cost-to-go, and satisfies the Bellman recursion of dynamic
programming and Markov chain
approximation~\cite{kushner2001stochastic, bertsekas2017dp}:
\begin{equation}
  u_D(X) = G(X, y), \qquad
  u_k(X) = \inf_{v \in \Uad}\set{\ell_k(X, v)\,\Delta t
    + u_{k+1}(X + \Delta t\, v)},
  \label{eq:bellman}
\end{equation}
solved backward in $k$ from terminal data $u_D = G$, with indicator
running cost $\ell_k(X, v) = \delta_{\Uad}(v)$. Under step size
$\Delta t = T/D$, $u_k^{\Delta t}$ at $t = k\Delta t$ converges to the
viscosity solution of \eqref{eq:HJB-PDE} as $\Delta t \to 0$ by
Barles--Souganidis convergence~\cite{barles1991convergence}.

The optimal velocity is obtained in closed form by maximizing the
Hamiltonian at the value-function gradient,
$\nabla u_y(X, t)$ in continuous time and $\nabla u_{k+1}(X)$ in
discrete time:
\begin{equation}
  v^\ast(X, t) := -\nabla_P H(X, \nabla u_y(X, t)),
  \qquad
  v_k^\ast(X) := -\nabla_P H(X, \nabla u_{k+1}(X)),
  \label{eq:v-opt-hamiltonian}
\end{equation}
with the same Hamiltonian $H$ in both, evaluated at different gradients.

\paragraph{Bang-bang optimum on $\Uad^q$ and the stability threshold $q\leq 1$.}
Maximizing $\inner{v}{P}$ over the Euclidean ball $\Uad^q(X)$ is a
one-variable problem, yielding closed forms for $H_q$ and $v^\ast$.

\begin{theorem}[Bang-bang optimum on $\Uad^q$]
\label{thm:HJ-threshold}
Let $\Uad^q$ be defined by \eqref{eq:Uad-generic} at $q \ge 0$. The
Hamiltonian \eqref{eq:Hamiltonian-def} and the optimal velocity fields
\eqref{eq:v-opt-hamiltonian} have closed forms
\begin{equation}
  H_q(X, P) = (c\norm{X}^q + b)\,\norm{P},
  \label{eq:H-generic}
\end{equation}
\begin{equation}
  v^\ast(X, t) = -\bigl(c\norm{X}^q + b\bigr)\,
    \frac{\nabla u_y(X, t)}{\norm{\nabla u_y(X, t)}},
  \qquad
  v_k^\ast(X) = -\bigl(c\norm{X}^q + b\bigr)\,
    \frac{\nabla u_{k+1}(X)}{\norm{\nabla u_{k+1}(X)}},
  \label{eq:v-opt-generic}
\end{equation}
on $\set{\nabla u_y(X, t) \ne 0}$ and $\set{\nabla u_{k+1}(X) \ne 0}$
respectively. The optimum is \emph{bang-bang}:
\begin{equation}
  \norm{v^\ast(X, t)} = \norm{v_k^\ast(X)} = c\norm{X}^q + b,
  \label{eq:bang-bang}
\end{equation}
attained on the boundary of $\Uad^q(X)$ antiparallel to the
corresponding costate.
\end{theorem}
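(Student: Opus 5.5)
The proof is a pointwise computation that uses only the Cauchy--Schwarz inequality on the Euclidean ball $\Uad^q(X)$. Fix $(X,P)$ and set $R := c\norm{X}^q + b \ge 0$, so that $\Uad^q(X)$ is the closed ball of radius $R$ centred at the origin. For any $v$ with $\norm{v}\le R$, Cauchy--Schwarz gives
\[
-\inner{P}{v} \le \norm{P}\,\norm{v} \le R\norm{P}.
\]
When $P \ne 0$, both inequalities are equalities exactly when $v$ is a nonnegative multiple of $-P$ and $\norm{v}=R$. That is, equality holds if and only if $v = -R\,P/\norm{P}$, provided $R>0$. When $P=0$, both sides vanish. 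Hence the supremum in \eqref{eq:Hamiltonian-def} is attained and equals $R\norm{P}$, which is \eqref{eq:H-generic}. The exponent $q \ge 0$ enters only through the value of $R$, so no restriction on $q$ is needed at this stage.

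Next I would compute the optimal velocity. On $\set{P\ne0}$ the map $P\mapsto R\norm{P}$ is smooth in $P$, with $\nabla_P H_q(X,P) = R\,P/\norm{P}$. Definition \eqref{eq:v-opt-hamiltonian} then gives $v^\ast = -R\,P/\norm{P}$. This agrees with the maximizer found above, which is what justifies calling the $\nabla_P H$ formula the argmax. Substituting $P=\nabla u_y(X,t)$ gives the continuous-time formula in \eqref{eq:v-opt-generic}. Substituting $P=\nabla u_{k+1}(X)$ gives the discrete one. The same Hamiltonian is used in both, as the paper remarks. Both formulas are valid on the nonvanishing sets stated in the theorem.

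The bang-bang property \eqref{eq:bang-bang} follows immediately, since $\norm{-R\,P/\norm{P}} = R$. The optimum is antiparallel to the costate by construction. Uniqueness of the maximizer when $R>0$, taken from the Cauchy--Schwarz equality case, shows that no interior point of the ball is optimal. In the degenerate case $R=0$ the ball is $\set{0}$, and the statement holds trivially.

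The main obstacle is not the algebra but interpretation. The formulas require $\nabla u_y$, respectively $\nabla u_{k+1}$, to exist and be nonzero. The value function is in general only a viscosity solution, so I would state the result pointwise wherever the gradient exists. I would note that at $P=0$ the Hamiltonian is not differentiable: there $\partial_P H_q$ is the whole ball, and every admissible $v$ is optimal. This is why the theorem restricts to $\set{\nabla u \ne 0}$. I would not try to prove regularity of $u$, since the theorem is a closed-form identity for the Hamiltonian maximizer given the gradient.
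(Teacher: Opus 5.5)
Your proposal is correct and follows the paper's proof: Cauchy--Schwarz on the ball of radius $c\norm{X}^q + b$ gives $H_q$ and the antiparallel maximizer, and substituting $P = \nabla u_y(X,t)$ or $P = \nabla u_{k+1}(X)$ then yields the closed forms and the bang-bang norm. Your additions only spell out what the paper's two-line argument leaves implicit: the equality case for uniqueness, the check that $-\nabla_P H_q$ coincides with the maximizer on $\set{P \ne 0}$, and the degenerate cases $c\norm{X}^q + b = 0$ and $P = 0$.
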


\begin{proof}
By Cauchy--Schwarz, $\sup_{v \in \Uad^q(X)}\set{-\inner{P}{v}} =
(c\norm{X}^q + b)\,\norm{P}$, attained at
$v = -(c\norm{X}^q + b)\,P/\norm{P}$ on $\set{P \ne 0}$. Substituting
$P = \nabla u_y(X, t)$ in \eqref{eq:v-opt-hamiltonian} gives the
continuous closed form, with norm $c\norm{X}^q + b$ on the stated set;
the discrete case follows from the same Cauchy--Schwarz with
$P = \nabla u_{k+1}(X)$.

\end{proof}

The bang-bang optimum saturates the boundary of the admissible class,
not its interior. At $q > 1$ this boundary contains velocity fields
whose forward trajectories blow up in finite time on $[0, T]$
(Section~\ref{sec:formulation}); at $q \le 1$ every boundary velocity
field admits a global trajectory. The stability threshold $q \le 1$ is
therefore necessary for the OC optimum to remain globally defined on
$[0, T]$.
Figure~\ref{fig:state-overflow} illustrates the
mechanism: at initialization the effective coefficient $c$ of
$v_\theta$ is small and all trajectories appear linear on $[0, T]$
regardless of $q$, so a supercritical architecture is
indistinguishable from a subcritical one in the observation window; as
training grows $c$ toward the admissible-class boundary, the
supercritical trajectories cross into finite-time blow-up while the
subcritical ones remain bounded.
\begin{figure}[h]
\centering
\includegraphics[width=\linewidth]{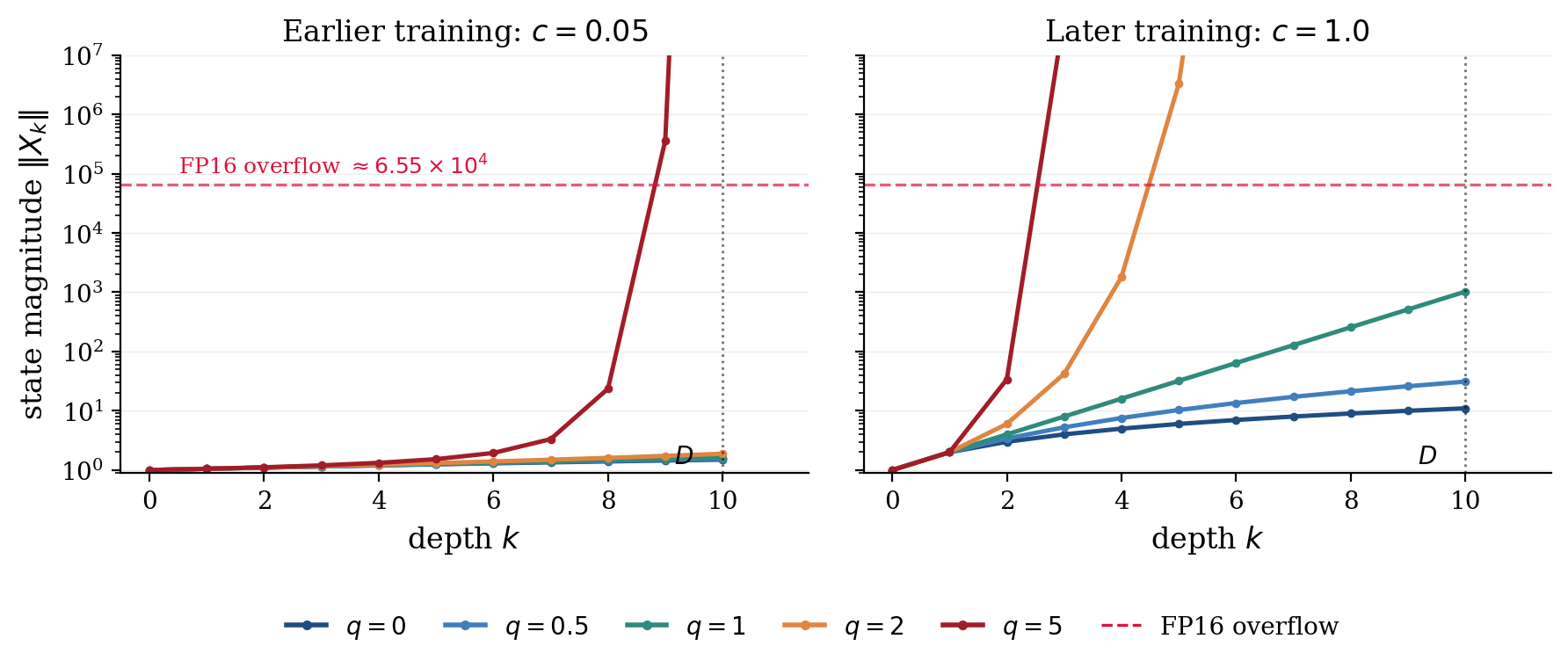}
\caption{
State trajectories $\|X_k\|$ under the residual-stack recursion
$r_{k+1} = r_k + c\, r_k^q$ with $r_0 = 1$ and $\Delta t = 1$, where $c$
is a schematic proxy for a training parameter --- small at initialization
(left, $c = 0.05$) and growing toward the admissible-class ceiling as
training approaches the optimal-control optimum (right, $c = 1$). The
axis $k$ is the discrete layer index. At small $c$, all $q \le 2$ trajectories appear near-linear
across $k \in [0, D]$ and a $q > 1$ architecture is indistinguishable
from a $q = 0$ one at moderate depth. At larger $c$, the
supercritical trajectories ($q > 1$) overflow within a few layers while
$q \le 1$ remain bounded --- the architecture with $q > 1$ therefore
harbors a latent instability, invisible at initialization, that surfaces
once $c$ grows enough to bring the overflow depth $k_\text{overflow}(c)$ into
$[0, D]$. 
}
\label{fig:state-overflow}
\end{figure}

\begin{remark}[Geometric realizations for sequence-model hidden states]
\label{rem:geometries-sequence-models}
For sequence-model hidden states $X \in \R^{d \times n}$, the magnitude
bound \eqref{eq:Uad-generic} admits two geometric realizations of the
admissible set: an entrywise \emph{box} (per-$(i, j)$ bound on
$\abs{v_{ij}}$) and a per-token $\ell^2$ \emph{ball} (per-column bound
on $\norm{v_{:,j}}_2$). 
\end{remark}

\paragraph{Well-posedness of the value function.}
Theorem~\ref{thm:HJ-threshold} and the stability threshold ensure that the
optimal trajectory of the OC problem exists on $[0, T]$ at $q \le 1$.
A separate question is whether the value function is well-posed --- existence, uniqueness, and continuous dependence on
the terminal datum $G$. 
Uniqueness and sufficient regularity of $u_y$ are needed for the costate $P(t) := \nabla_X u_y(X^\ast(t), t)$ to be well-defined a.e. along the optimal trajectory --- Theorem~\ref{thm:backward-optimal} in Section~\ref{sec:lg-fwd-bwd} avoids
this route altogether, working with the discrete adjoint
$\lambda_k$ directly via Danskin's envelope identity under the sole
assumption that an OC-optimal velocity field $v^\ast$ exists.

For the continuous value function $u_y$ on the unbounded spatial
domain $\R^{d \times n}$, the Crandall--Lions comparison principle
delivers well-posedness when $H$ satisfies a modulus condition
\begin{equation}
  \abs{H(X_1, P) - H(X_2, P)} \le
    \omega\bigl(\norm{X_1 - X_2}(1 + \norm{P})\bigr)
  \label{eq:CL-modulus}
\end{equation}
for some modulus $\omega$, together with a linear-growth control
\begin{equation}
  \abs{H(X, P)} \le C\,(1 + \norm{X})(1 + \norm{P})
  \label{eq:CL-linear-growth}
\end{equation}
uniform in $(X, P)$~\cite{crandall1992user, tran2021hje}. Without the
modulus condition, viscosity solutions can exist but be non-unique, and
continuous dependence then loses meaning.

\begin{proposition}[Continuous-time HJB well-posedness on $\Uad^q$ at $q \le 1$]
\label{prop:HJB-wellposedness}
Let $H_q$ be the Hamiltonian of \eqref{eq:H-generic}, and let terminal
datum $G$ be bounded uniformly continuous. When $q \in (0, 1)$,
additionally assume $b > 0$ and $G$ globally Lipschitz. Then at
$q \in [0, 1]$, the HJB equation \eqref{eq:HJB-PDE} admits a unique
bounded viscosity solution
$u_y \in C(\R^{d \times n} \times [0, T])$ coinciding with the value
function of the OC problem \eqref{eq:learning-continuous}, and
\begin{equation}
  \norm{u_y^{G_1} - u_y^{G_2}}_{L^\infty(\R^{d \times n} \times [0, T])}
  \le \norm{G_1 - G_2}_{L^\infty(\R^{d \times n})}
  \label{eq:hjb-stability}
\end{equation}
for any two admissible terminal data $G_1, G_2$. At $q > 1$, the
Hamiltonian $H_q$ violates the linear-growth control
\eqref{eq:CL-linear-growth}.
\end{proposition}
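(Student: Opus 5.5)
The plan is to verify the two Crandall--Lions hypotheses \eqref{eq:CL-modulus} and \eqref{eq:CL-linear-growth} for the explicit Hamiltonian $H_q(X,P) = (c\norm{X}^q + b)\norm{P}$ of Theorem~\ref{thm:HJ-threshold}. We then invoke the comparison principle to get uniqueness and the $L^\infty$ contraction \eqref{eq:hjb-stability}, and identify the unique solution with the value function by the standard dynamic-programming argument. The failure statement at $q > 1$ is a direct computation.

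\textbf{Linear growth.} For $q \in [0,1]$, use $\norm{X}^q \le 1 + \norm{X}$. This gives $H_q \le (c + b + c\norm{X})\norm{P} \le C(1 + \norm{X})(1 + \norm{P})$ with $C = c + b$.

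\textbf{Modulus condition.} Note that $\abs{H_q(X_1,P) - H_q(X_2,P)} = c\,\abs{\norm{X_1}^q - \norm{X_2}^q}\,\norm{P}$.
\begin{itemize}
\item At $q = 1$, the reverse triangle inequality bounds this by $c\norm{X_1 - X_2}\norm{P}$, so $\omega(s) = cs$ works.
\item At $q = 0$, the Hamiltonian is independent of $X$, and the modulus condition is trivial.
\item At $q \in (0,1)$, the function $r \mapsto r^q$ is $q$-H\"older, so the difference is at most $c\norm{X_1 - X_2}^q\norm{P}$. This is not of the form $\omega(\norm{X_1 - X_2}(1 + \norm{P}))$ when $\norm{P}$ is large and $\norm{X_1 - X_2}$ is small.
\end{itemize}
The intermediate case is where the extra hypotheses enter, and I expect it to be the main obstacle. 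I would first try a Lipschitz estimate: since $G$ is globally Lipschitz and the dynamics preserve Lipschitz bounds on bounded time intervals (Gr\"onwall with $q \le 1$), the value function is Lipschitz in $X$. We may then restrict attention to $\norm{P} \le L$, where $L$ is the Lipschitz constant, or truncate $H$ in $P$ beyond $L$. There the difference is at most $cL\norm{X_1 - X_2}^q$, which is a genuine modulus $\omega(s) = cLs^q$.

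The role of $b > 0$ is to keep the velocity bound $c\norm{X}^q + b$ bounded away from zero. This gives a nondegenerate, uniformly equicontinuous control set, which is what the dynamic-programming identification and the Lipschitz propagation need near $X = 0$, where $\norm{X}^q$ fails to be Lipschitz. With these ingredients, comparison for bounded uniformly continuous sub- and supersolutions yields a unique bounded viscosity solution. Comparing solutions with data $G_1$ and $G_2 \pm \norm{G_1 - G_2}_\infty$ gives the contraction \eqref{eq:hjb-stability}, since constants shift solutions of \eqref{eq:HJB-PDE}.

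\textbf{Identification with the value function.} The value function is bounded by $\norm{G}_\infty$. It is continuous by continuous dependence of trajectories, which are globally defined by Section~\ref{sec:ode-classical}. It satisfies the dynamic programming principle. By the standard Bardi--Capuzzo-Dolcetta argument it is therefore a viscosity solution, and so it coincides with $u_y$ by uniqueness.

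\textbf{Failure at $q > 1$.} Take $P$ with $\norm{P} = 1$ and $X = s e$ with $s \to \infty$. Then $H_q / \bigl((1 + s)\cdot 2\bigr) \ge cs^q / (2(1 + s)) \to \infty$, so no constant $C$ satisfies \eqref{eq:CL-linear-growth}.
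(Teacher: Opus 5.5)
Your outline follows the paper's proof almost step for step: the bound $\norm{X}^q\le 1+\norm{X}$ with $C=c+b$, the three-case modulus analysis, the bounded-gradient modulus $\omega_L(s)=cLs^q$ at $q\in(0,1)$, and superlinearity at $q>1$. The gap is in the one step that carries real content, the Lipschitz bound in $X$ at $q\in(0,1)$, which rests on an argument that fails.

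Gr\"onwall propagation along trajectories needs Lipschitz dynamics. Coupling two trajectories through the same normalized control gives $\frac{d}{dt}\norm{X_1-X_2}\le \abs{f(X_1)-f(X_2)}$ with $f(X)=c\norm{X}^q+b$. For $q<1$ this $f$ is only $q$-H\"older at the origin. The constant $b$ cancels in the difference, and $q\le 1$ controls growth, not regularity. The best this inequality yields is $\norm{X_1(t)-X_2(t)}\le\bigl(\norm{X_1-X_2}^{1-q}+c(1-q)t\bigr)^{1/(1-q)}$, which does not even vanish as $X_2\to X_1$. So $b>0$ never actually enters your argument; the sentence about a ``nondegenerate, uniformly equicontinuous control set'' names no mechanism. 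Your continuity claim for the value function via ``continuous dependence of trajectories'' inherits the same problem, and without the Lipschitz bound the restricted modulus $cLs^q$ cannot be invoked.

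The paper obtains the Lipschitz bound from the coercivity $H_q(X,P)\ge b\norm{P}$, uniform in $X$, together with Lipschitz $G$. If you prefer control language, $b$ enters through controllability. The velocity $b(X_1-X_2)/\norm{X_1-X_2}$ is admissible everywhere, so $u_y(X_2,t)\le u_y(X_1,t+\norm{X_1-X_2}/b)$. Combined with a time-Lipschitz estimate, obtained by truncating near-optimal trajectories, this gives a locally uniform Lipschitz bound in $X$ with no regularity of $\norm{X}^q$ needed.

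There is a second, smaller gap in uniqueness. Even with the value function known to be $L$-Lipschitz, your closing appeal to ``comparison for bounded uniformly continuous sub- and supersolutions'' is exactly what is unavailable at $q\in(0,1)$, since \eqref{eq:CL-modulus} fails globally. Truncating $H_q$ at $\norm{P}=L$ only gives uniqueness among $L$-Lipschitz solutions, because an arbitrary bounded viscosity solution of \eqref{eq:HJB-PDE} need not solve the truncated equation. The paper closes this by asserting the Lipschitz bound for every bounded viscosity solution and then comparing two Lipschitz solutions. Alternatively, you can double variables against the Lipschitz value function itself. The penalization gradient then lies in its sub- or superdifferential and is bounded by $L$, and $\omega_L$ closes the argument.

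Your remaining deviations are fine: existence via the dynamic programming principle rather than Perron's method (once the Lipschitz bound supplies continuity of the value function), and the contraction via comparison with data $G_2\pm\norm{G_1-G_2}_\infty$.
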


\begin{proof}
\emph{Linear-growth control.} For $q \in [0, 1]$, $\norm{X}^q \le 1 + \norm{X}$
gives $\abs{H_q(X, P)} = (c\norm{X}^q + b)\norm{P} \le (c + b)(1 + \norm{X})(1 + \norm{P})$,
so \eqref{eq:CL-linear-growth} holds with $C = c + b$. At $q > 1$,
$\abs{H_q(X, P)} \ge c\norm{X}^q\norm{P}$ is superlinear in $\norm{X}$, and
\eqref{eq:CL-linear-growth} fails.

\emph{Modulus analysis.} Using $\abs{a^q - b^q} \le \abs{a - b}^q$ for
$a, b \ge 0$, $q \in [0, 1]$, together with
$\bigl|\norm{X_1} - \norm{X_2}\bigr| \le \norm{X_1 - X_2}$,
\begin{equation}
  \abs{H_q(X_1, P) - H_q(X_2, P)}
  \;\le\; c\,\norm{X_1 - X_2}^q\,\norm{P}.
  \label{eq:holder-q}
\end{equation}
At $q = 0$, $H_0(X, P) = b\norm{P}$ is $X$-independent and
\eqref{eq:CL-modulus} holds trivially. At $q = 1$, \eqref{eq:holder-q}
yields $c\norm{X_1 - X_2}\norm{P} \le c\norm{X_1 - X_2}(1 + \norm{P})$,
so \eqref{eq:CL-modulus} holds with $\omega(s) = cs$. At
$q \in (0, 1)$, \eqref{eq:CL-modulus} does not hold globally:
parameterizing $\norm{X_1 - X_2}(1 + \norm{P}) = s$ fixed with
$\norm{X_1 - X_2} = s/(1 + \norm{P})$, the right-hand side of
\eqref{eq:holder-q} becomes
$c s^q \cdot \norm{P}/(1 + \norm{P})^q$,
whose supremum over $\norm{P} \ge 0$ diverges because $q < 1$. On any
bounded gradient set $\set{\norm{P} \le R}$, however,
\eqref{eq:holder-q} yields the $R$-dependent modulus
$\omega_R(s) = cR\,s^q$.

\emph{Well-posedness.} At $q = 0$, $H_0$ is $X$-independent, so comparison for
$X$-independent Hamiltonians is standard~\cite{crandall1992user}; only
bounded uniform continuity of $G$ is required. Existence and
\eqref{eq:hjb-stability} follow from Perron's method. At $q = 1$,
\eqref{eq:CL-modulus} with $\omega(s) = cs$
(from Step 2) together with \eqref{eq:CL-linear-growth} gives
well-posedness by the Crandall--Lions comparison principle and Perron's
method under bounded uniformly continuous
$G$~\cite{crandall1992user, tran2021hje}.
At $q \in (0, 1)$, the failure of \eqref{eq:CL-modulus}
globally (Step 2) precludes the standard CL route. Instead, the
additional hypotheses $b > 0$ and Lipschitz $G$ enable the following
alternative. The coercivity
\begin{equation}
  H_q(X, P) \ge b\,\norm{P} \quad \text{uniformly in } X,
  \label{eq:coercivity}
\end{equation}
together with the Lipschitz terminal datum, implies that any bounded
viscosity solution is Lipschitz in $X$ on
$\R^{d \times n} \times [0, T]$ with a constant $L$ depending only on
$b$, $T$, and $\Lip(G)$~\cite{bardi1997optimal, fleming2006controlled}.
Comparison between two Lipschitz solutions requires
\eqref{eq:CL-modulus} only on the bounded gradient set
$\set{\norm{P} \le L}$, where the $R$-dependent modulus
$\omega_L(s) = cL\,s^q$ established in Step 2 closes the standard
doubling-variable argument~\cite{crandall1992user}. Existence follows
from Perron's method with the value function of
\eqref{eq:learning-continuous} as a viscosity solution, and
\eqref{eq:hjb-stability} follows from the standard $L^\infty$-estimate
on the difference of two value functions with common Hamiltonian.
\end{proof}

The discrete value function $u_k$ of the Bellman recursion
\eqref{eq:bellman} admits a simpler route to well-posedness that
bypasses viscosity-solution machinery. The per-block Bellman operator
\begin{equation}
  \mathcal{T}_k[\phi](X) := \inf_{v \in \Uad}\set{\ell_k(X, v)\,\Delta t
    + \phi(X + \Delta t\, v)}
  \label{eq:bellman-operator}
\end{equation}
is monotone and constant-shift invariant by its infimum structure;
these two properties deliver well-posedness directly.

\begin{proposition}
    [Discrete-time HJB well-posedness on $\Uad^q$]
\label{prop:discrete-HJB-wellposedness}
Let $\mathcal{T}_k$ be the per-block Bellman operator
\eqref{eq:bellman-operator}. Then $\mathcal{T}_k$ is monotone:
$\phi \le \psi \Rightarrow \mathcal{T}_k[\phi] \le \mathcal{T}_k[\psi]$;
and constant-shift invariant:
$\mathcal{T}_k[\phi + c] = \mathcal{T}_k[\phi] + c$ for $c \in \R$. For
any $q \ge 0$ and bounded terminal data $u_D = G$, the Bellman
recursion \eqref{eq:bellman} admits a unique discrete value function
$u_k : \R^{d \times n} \to \R$ at each depth
$k \in \set{0, 1, \ldots, D-1}$ via the backward iteration
$u_k = \mathcal{T}_k[u_{k+1}]$, and
\begin{equation}
  \norm{u_k^{G_1} - u_k^{G_2}}_{L^\infty(\R^{d \times n})}
  \le \norm{G_1 - G_2}_{L^\infty(\R^{d \times n})}
  \label{eq:bellman-stability}
\end{equation}
for any two bounded terminal data
$G_1, G_2$~\cite{ kushner2001stochastic,
bertsekas2017dp}.
\end{proposition}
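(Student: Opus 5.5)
The plan is to prove the three claims in order: monotonicity, constant-shift invariance, and then well-posedness with the stability estimate. The last follows from the first two by a short backward induction in $k$.

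\emph{Monotonicity and shift invariance.} Fix $X$ and $v \in \Uad$; at such $v$ the indicator running cost vanishes. If $\phi \le \psi$ pointwise, then
\[
\ell_k(X,v)\Delta t + \phi(X+\Delta t\, v) \le \ell_k(X,v)\Delta t + \psi(X+\Delta t\, v).
\]
Taking the infimum over $v$ on both sides gives $\mathcal{T}_k[\phi](X) \le \mathcal{T}_k[\psi](X)$. For a constant $c$, the shift passes through the infimum unchanged, so $\mathcal{T}_k[\phi+c] = \mathcal{T}_k[\phi]+c$. These steps use only the infimum structure, so they hold for every $q \ge 0$.

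\emph{Well-definedness.} I would show by backward induction that each $u_k$ is a bounded real-valued function with $\inf G \le u_k \le \sup G$. The base case $u_D = G$ is given. Suppose $u_{k+1}$ takes values in $[\inf G, \sup G]$. The admissible set $\Uad^q(X)$ is a nonempty closed ball: it contains $v=0$ since $c\norm{X}^q + b \ge 0$. Hence the infimum defining $u_k(X)$ ranges over a nonempty set of values in $[\inf G, \sup G]$, and is therefore a finite number in that interval. Uniqueness is automatic, because the recursion determines $u_k = \mathcal{T}_k[u_{k+1}]$ explicitly from $u_{k+1}$.

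This is where the role of $q$ shows up, and it is the only point needing care. Blow-up at $q>1$ does not interfere here, for two reasons. First, each Bellman step evaluates $u_{k+1}$ at a single point $X + \Delta t\, v$, which is finite for every finite $X$. Second, boundedness of $G$ caps the values. So no growth condition on $q$ is needed, in contrast with Proposition~\ref{prop:HJB-wellposedness}.

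\emph{Stability.} Set $\delta := \norm{G_1-G_2}_{L^\infty}$. I would show by induction that $u^{G_2}_{k} - \delta \le u^{G_1}_k \le u^{G_2}_k + \delta$. At $k = D$ this is the definition of $\delta$. For the inductive step, apply monotonicity to $u^{G_1}_{k+1} \le u^{G_2}_{k+1} + \delta$, then shift invariance, to get
\[
u^{G_1}_k \le \mathcal{T}_k[u^{G_2}_{k+1}+\delta] = u^{G_2}_k + \delta .
\]
The symmetric argument gives the lower bound, and taking the supremum over $X$ yields \eqref{eq:bellman-stability}.

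I expect no real obstacle. The step to be careful with is checking that the infimum is finite and over a nonempty set, which holds for every $q \ge 0$.
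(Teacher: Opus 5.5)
Your proposal is correct and follows the paper's proof essentially step for step. Monotonicity and constant-shift invariance come directly from the infimum structure, and the $L^\infty$ contraction follows by backward induction using those two properties together with the symmetric argument. Your explicit check that $\Uad^q(X)$ contains $v = 0$, so that each $u_k$ is finite and takes values in $[\inf G, \sup G]$, fills in what the paper dismisses as ``immediate from the recursive structure with bounded $G$.''
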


\begin{proof}
Monotonicity follows from the infimum structure of
\eqref{eq:bellman-operator}: if $\phi \le \psi$, then for every
$v \in \Uad$, $\ell_k(X, v)\Delta t + \phi(X + \Delta t\, v)
\le \ell_k(X, v)\Delta t + \psi(X + \Delta t\, v)$, and the infimum
over $v$ preserves the inequality. Constant-shift invariance follows
from $\inf_v\{f(v) + c\} = \inf_v f(v) + c$ for any constant $c$ and
function $f$. Existence and uniqueness of $u_k$ via the backward
iteration $u_k = \mathcal{T}_k[u_{k+1}]$ from $u_D = G$ are then
immediate from the recursive structure with bounded $G$. The $L^\infty$
contraction \eqref{eq:bellman-stability} follows from monotonicity
plus constant-shift invariance: $G_1 \le G_2 + \norm{G_1 - G_2}_\infty
\Rightarrow u_D^{G_1} \le u_D^{G_2} + \norm{G_1 - G_2}_\infty
\Rightarrow u_k^{G_1} \le u_k^{G_2} + \norm{G_1 - G_2}_\infty$ by
backward induction (monotonicity preserves the inequality through
$\mathcal{T}_k$; constant-shift invariance carries the constant through
unchanged). Symmetric argument gives the reverse inequality.
\end{proof}

Propositions~\ref{prop:HJB-wellposedness}
and~\ref{prop:discrete-HJB-wellposedness} make both continuous-time and
discrete-time value functions well-defined. Still, for the value-function
gradient along the optimal trajectory to be well-defined, so that the
optimal velocity fields $v^\ast$ are correspondingly well-defined, we
need additional regularity on the value function, addressed below.

\paragraph{Regularity of the value function for the optimal velocity.}
Under the Crandall--Lions hypotheses of
Proposition~\ref{prop:HJB-wellposedness} together with Lipschitz
continuity of the terminal datum $G$, the viscosity solution $u_y$
is locally Lipschitz on $\R^{d \times n} \times [0,
T]$~\cite{fleming2006controlled}. By Rademacher's theorem, $u_y$ is
therefore differentiable on a Lebesgue-full subset of
$\R^{d \times n} \times [0, T]$, with $\nabla_X u_y$ the classical
gradient on that subset. We take the costate
$P(t) := \nabla_X u_y(X^\ast(t), t)$ in this sense: it is defined
for a.e. $t \in [0, T]$ along the OC-optimal trajectory $X^\ast(t)$.
The bang-bang direction of Theorem~\ref{thm:HJ-threshold} and the
costate-based stability bounds of Section~\ref{sec:lg-fwd-bwd} are
understood at points where $\nabla_X u_y$ is well-defined; all
downstream estimates involve integrals against $dt$, for which a.e.
definition is sufficient.

\paragraph{Well-posed OC problem for stable training.}
The bang-bang nature of the optimal velocity $v^\ast$
(Theorem~\ref{thm:HJ-threshold}) places the OC optimum on the boundary
of $\Uad^q$. Engineering safeguards such as careful initialization
may keep realized velocities away from this boundary during training,
but the OC optimum itself saturates it (Figure~\ref{fig:state-overflow}
illustrates this contrast in a toy setting). At $q > 1$, the boundary
contains velocity fields whose forward trajectory blows up in finite
time on $[0, T]$ (Section~\ref{sec:formulation}), and the optimum may
select precisely those; blow-up is then a possibility the
optimization cannot dodge. At $q \le 1$, by contrast, every
admissible velocity field --- not just the optimum --- yields a
globally defined trajectory, so blow-up is ruled out by the class
itself rather than by any property of the optimum. The stability threshold
$q \le 1$ is therefore the operational requirement for stable
training of the OC problem.
The coincident HJB well-posedness at $q \le 1$
(Proposition~\ref{prop:HJB-wellposedness}), together with the Lipschitz a.e.-differentiability argument above,  makes $\nabla u_y$ well-defined a.e. on the optimal trajectory, so the costate
$P(t) := \nabla_X u_y(X^\ast(t), t)$ along the optimal trajectory is a
well-defined object a.e.

The costate satisfies the HJB backward characteristic
\begin{equation}
  \dot P(t) = \nabla_X H_q(X^\ast(t), P(t)),
  \qquad P(T) = \nabla_X G(X^\ast(T), y),
  \label{eq:hjb-backward-char}
\end{equation}
which coincides with the adjoint
equation~\eqref{eq:neural-ode-adjoint} evaluated along the optimal trajectory $X^\ast(\cdot)$
with optimal velocity $v = v^\ast$, via Danskin's envelope
identity~\cite{danskin1967}
\begin{equation}
  \nabla_X H_q(X, P) \;=\; -\bigl(\partial_X v^\ast(X, t)\bigr)^{\!\top} P,
  \label{eq:danskin}
\end{equation}
with $\partial_X$ taken at fixed costate $P = \nabla u_y(X, t)$.
Section~\ref{sec:lg-fwd-bwd} uses the well-defined a.e. costate together with
Danskin's identity to derive forward and backward stability bounds on
the discrete architecture, refining the Lipschitz behavior of
$v^\ast$ into quantitative depth-explicit bounds.

\subsection{Forward and backward stability of the discrete architecture}
\label{sec:lg-fwd-bwd}

This subsection brings the OC analysis of Section~\ref{sec:lg-setup} to
the discrete architecture \eqref{eq:discrete-dynamics} --- the actual
residual stack trained by backpropagation --- and produces quantitative
depth-explicit stability bounds. The two objects analyzed are the
\emph{forward state} $X_k$ generated by the residual recursion
\begin{equation}
  X_{k+1} = X_k + \Delta t\, v(X_k;\theta_k), \qquad k = 0, 1, \dots, D-1,
  \label{eq:fwd-bwd-state}
\end{equation}
and the \emph{discrete costate} (equivalently, the backpropagated
adjoint) $P_k$ generated by the chain-rule recursion
\begin{equation}
  P_k = \bigl(I + \Delta t\,\nabla_X v(X_k;\theta_k)\bigr)^{\!\top} P_{k+1},
  \qquad P_D = \nabla_X G(X_D, y),
  \label{eq:fwd-bwd-costate}
\end{equation}
solved backward from depth $D$. The cumulative Jacobian
\begin{equation}
  J_{i:D} \;:=\; \prod_{k=i+1}^{D}
    \bigl(I + \Delta t\,\nabla_X v(X_{k-1};\theta_{k-1})\bigr)
  \label{eq:fwd-bwd-jacobian}
\end{equation}
transports $P_D$ to $P_i$ as $P_i = J_{i:D}^{\!\top} P_D$, so a bound on
$\norm{J_{i:D}}_2$ is a bound on the backpropagated costate uniformly
in the terminal datum.

\paragraph{Forward stability.} The forward analysis redoes the Gr\"onwall argument on
\eqref{eq:fwd-bwd-state} (Theorem~\ref{thm:forward-bound}).
The bound applies to \emph{every} admissible trajectory,
not only to the OC-optimal one.

\begin{theorem}[Forward state bound]
\label{thm:forward-bound}
Let $\set{X_k}_{k=0}^D$ be the discrete trajectory
\eqref{eq:discrete-dynamics} with residual scaling $\Delta t > 0$ and
any admissible parameters $\theta_k$ realizing
$v(\cdot;\theta_k) \in \Uad^q$ with scalar constants $c, b$ uniform in
$\theta_k$. Then
\begin{equation}
  \norm{X_D} \;\le\;
  \begin{cases}
    \norm{X_0} + b\,D\Delta t, & q = 0 \text{ (linear in $D$)}, \\[4pt]
    \bigl(\norm{X_0}^{1-q} + c(1-q)\,D\Delta t\bigr)^{1/(1-q)},
      & q \in (0, 1),\ b = 0 \text{ (polynomial in $D$)}, \\[4pt]
    (1 + c\Delta t)^D\,\norm{X_0}
      + \tfrac{b}{c}\bigl((1 + c\Delta t)^D - 1\bigr),
      & q = 1 \text{ (geometric in $D$)}.
  \end{cases}
  \label{eq:forward-bound}
\end{equation}
The $q = 1$ bound is majorized by
$e^{cD\Delta t}\norm{X_0} + \tfrac{b}{c}(e^{cD\Delta t} - 1)$; the
$q \in (0, 1)$ bound is polynomial in $D$ of degree $1/(1-q)$. For
$b > 0$ at $q \in (0, 1)$, the shift $r := (b/c)^{1/q}$ gives the
analogous bound on $\norm{X_D} + r$ with the rate constant $c$ replaced
by $2c$.
\end{theorem}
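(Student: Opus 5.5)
The plan is to reduce everything to a scalar recursion for $r_k := \norm{X_k}$ and then treat the three regimes separately. By the triangle inequality and $v(\cdot;\theta_k) \in \Uad^q$ with $c, b$ uniform in $\theta_k$,
\begin{equation*}
  r_{k+1} \le r_k + \Delta t\,\norm{v(X_k;\theta_k)} \le r_k + \Delta t\,\bigl(c\, r_k^q + b\bigr), \qquad k = 0, \dots, D-1.
\end{equation*}
Because the right-hand side is nondecreasing in $r_k$, any comparison sequence obeying the same recursion with equality (or with a larger increment) dominates $r_k$ by induction. No property of the optimum is used, so the bound holds for every admissible trajectory, as claimed.

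\emph{Case $q = 0$.} Here $\norm{x}^0 = 1$, so the velocity bound is the constant $c + b$. I read the stated bound with the convention that this constant is written $b$ (equivalently $c = 0$, the bounded-velocity case of Section~\ref{sec:formulation}). Telescoping $r_{k+1} \le r_k + b\Delta t$ over $D$ steps then gives $r_D \le r_0 + bD\Delta t$.

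\emph{Case $q = 1$.} The recursion is affine: $r_{k+1} \le (1 + c\Delta t)\, r_k + b\Delta t$. Induction gives
\begin{equation*}
  r_D \le (1+c\Delta t)^D r_0 + b\Delta t \sum_{j=0}^{D-1}(1+c\Delta t)^j.
\end{equation*}
The geometric sum equals $\bigl((1+c\Delta t)^D - 1\bigr)/(c\Delta t)$, which yields the stated form. The exponential majorant follows from $1 + x \le e^x$, which reproduces the discrete analogue of the Gr\"onwall bound \eqref{eq:gronwall-bound}.

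\emph{Case $q \in (0,1)$, $b = 0$.} This is the step I expect to be the main obstacle, because the recursion is nonlinear. The idea is to linearize it with the substitution $s_k := r_k^{1-q}$, the discrete counterpart of solving $\dot r = c r^q$ exactly. Set $\varphi(r) = r^{1-q}$, which is increasing and concave on $[0,\infty)$.
\begin{itemize}[leftmargin=*,topsep=2pt,itemsep=2pt]
  \item If $r_k > 0$, monotonicity of $\varphi$ and the tangent-line inequality give
  \begin{equation*}
    s_{k+1} \le \varphi\bigl(r_k + c\Delta t\, r_k^q\bigr) \le \varphi(r_k) + \varphi'(r_k)\, c\Delta t\, r_k^q = s_k + (1-q)\,c\,\Delta t.
  \end{equation*}
  \item If $r_k = 0$, the recursion forces $r_{k+1} = 0$, so the same increment bound holds trivially. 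This edge case has to be checked separately because $\varphi'(0) = \infty$.
\end{itemize}
Summing over $k$ gives $s_D \le r_0^{1-q} + c(1-q) D\Delta t$. Raising to the power $1/(1-q)$ gives the stated bound, which is polynomial in $D$ of degree $1/(1-q)$.

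\emph{Case $q \in (0,1)$, $b > 0$.} Shift by $\rho := (b/c)^{1/q}$, so that $b = c\rho^q$. Both $r^q$ and $\rho^q$ are at most $(r+\rho)^q$, hence
\begin{equation*}
  c\, r^q + b \le 2c\,(r+\rho)^q.
\end{equation*}
The shifted sequence $\tilde r_k := r_k + \rho$ therefore satisfies $\tilde r_{k+1} \le \tilde r_k + 2c\Delta t\,\tilde r_k^q$. This is the $b = 0$ recursion with $c$ replaced by $2c$, so the previous case applies verbatim to $\tilde r_k$. This gives the stated bound on $\norm{X_D} + \rho$.
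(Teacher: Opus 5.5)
Your proof is correct and matches the paper's argument step for step: the triangle inequality reduces to a scalar recursion, which is telescoped at $q=0$, summed as a geometric series at $q=1$, and handled at $q\in(0,1)$ by the concavity (tangent-line) bound for $r^{1-q}$ and, when $b>0$, the shift by $(b/c)^{1/q}$ using $c\,r^q + b \le 2c\,(r+\rho)^q$. You are also slightly more careful than the paper in two places: you treat the $r_k = 0$ edge case explicitly (the paper applies its concavity inequality only for $a>0$), and you state the $q=0$ convention openly (the paper simply reads the admissible bound there as $\norm{v}\le b$).
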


Theorem~\ref{thm:forward-bound}
applies both off the OC optimum and at it: the input-magnitude exponent
$q$ governs the qualitative growth regime, while the parameters $c, b$
control the quantitative size. Figure~\ref{fig:state-overflow}
illustrates this parameter dependence: an architecture that appears
safe at initialization can diverge as training approaches the OC
optimum.

\begin{proof}
At $q = 0$ the bound \eqref{eq:Uad-generic} reads
$\norm{v(X_k;\theta_k)} \le b$; substituting into
\eqref{eq:discrete-dynamics} gives
$\norm{X_{k+1}} \le \norm{X_k} + b\Delta t$, and iterating $D$ times
yields the $q = 0$ bound. At $q = 1$ the bound reads
$\norm{v(X_k;\theta_k)} \le c\norm{X_k} + b$; substituting gives
$\norm{X_{k+1}} \le (1 + c\Delta t)\norm{X_k} + b\Delta t$, and iterating
$D$ times with the geometric series yields the $q = 1$ bound. The
majorization by $e^{cD\Delta t}\norm{X_0} + \tfrac{b}{c}(e^{cD\Delta t} - 1)$
follows from $(1 + c\Delta t)^D \le e^{cD\Delta t}$, applied to both terms.
For $q \in (0, 1)$ at $b = 0$, concavity of
$f(x) = x^{1-q}$ gives
$(a + h)^{1-q} \le a^{1-q} + (1-q)\,a^{-q}\,h$ for $a, h > 0$;
applying this to $a = \norm{X_k}$ and $h = c\Delta t\,\norm{X_k}^q$
yields $\norm{X_{k+1}}^{1-q} \le \norm{X_k}^{1-q} + c(1-q)\,\Delta t$,
and telescoping from $k = 0$ to $D$ gives the $q \in (0, 1)$ bound.
For $b > 0$, the substitution $Y_k := \norm{X_k} + (b/c)^{1/q}$
satisfies $Y_{k+1} \le Y_k + \Delta t\,(c Y_k^q + b) \le Y_k + 2c\Delta t\,Y_k^q$,
reducing to the $b = 0$ case with $2c$ in place of $c$.
\end{proof}

\begin{remark}[Practical depth limits at $q = 1$]
\label{rem:depth-limits}
The exponential factor $(1 + c\,\Delta t)^D$ in \eqref{eq:forward-bound}
is finite at every depth but grows rapidly. With $\Delta t = 1$ and
$c = 1$ the factor is $2^D$, exceeding the FP16 maximum
($\approx 6.5 \times 10^4$) at $D = 17$ and the FP32/BF16 dynamic
range ($\approx 3.4 \times 10^{38}$) at $D = 128$. Even at $q = 1$, therefore, the input-magnitude bound imposes a
practical depth limit set by the numerical range.
\end{remark}

\paragraph{Backward stability.}
The backward dynamics of the learning problem are governed by the
adjoint equation \eqref{eq:discrete-adjoint} during training (off the
OC optimum) and by the HJB backward characteristic
\eqref{eq:hjb-backward-char} at the OC optimum via Danskin's identity. We treat the two
regimes in turn.

\emph{Off-optimum: cumulative Jacobian under a Lipschitz hypothesis.}
Suppose the velocity field has, along the realized discrete trajectory
$\set{X_k}_{k=0}^D$, an operator-norm gradient bounded by a constant
$L$ uniform in $k$ and the admissible parameters:
$\max_{0 \le k < D} \norm{\nabla_X v(X_k; \theta_k)}_2 \le L$. By the
triangle inequality and submultiplicativity of the operator norm, the
cumulative backward Jacobian \eqref{eq:fwd-bwd-jacobian} satisfies the standard bound~\cite{gouk2021lipschitz, miyato2018spectral, anil2019sorting}
\begin{equation}
  \norm{J_{i:D}}_2 \;\le\; (1 + L\Delta t)^{D-i}
    \;\le\; e^{L(T - i\Delta t)},
  \label{eq:fwd-bwd-jacobian-bound}
\end{equation}
uniformly in the admissible parameters $\set{\theta_k}$. The constant
$L$ is determined by the architectural primitives: for a Peri-LN
block at $q = 0$ with Lipschitz sub-block $F$ (attention or MLP),
$L \le L_{\mathrm{Peri}} = \Lip(\LN)\Lip(F)$\footnote{Throughout the paper, LN and RMSNorm are understood in their
standard $\varepsilon$-regularized form,
$\mathrm{LN}(x) = \gamma \odot (x - \mu) / \sqrt{\sigma^2 + \varepsilon} + \beta$
for a small constant $\varepsilon > 0$. The Lipschitz constant
$\Lip(\mathrm{LN})$ depends on $\varepsilon$, but the $q = 0$
magnitude bound in Table~\ref{tab:primitives} holds uniformly in
$\varepsilon > 0$.}; for a
spectrally normalized linear layer at $q = 1$,
$L \le \norm{W}_{\mathrm{op}}$. 

\emph{At the OC optimum: costate bound via Danskin.} At the OC optimum,
the discrete adjoint $\lambda_k$ of \eqref{eq:discrete-adjoint}
evaluated along the optimal trajectory with $v = v^\ast$ coincides with
the discrete costate 
$$ P_k := \nabla u_{k}(X_k^\ast), $$and the
OC-optimal forward state bound coincides with
Theorem~\ref{thm:forward-bound} since $v^\ast$ saturates
\eqref{eq:Uad-generic} on the boundary of $\Uad^q$. The asymmetry
between forward and backward at the optimum lies in the costate,
controlled by the input-magnitude exponent.

\begin{theorem}[Lipschitz bound on the OC-optimal discrete costate]
\label{thm:backward-optimal}
Fix $q \in [0, 1]$. Let $\set{X_k^\ast}_{k=0}^D$ be the discrete
OC-optimal trajectory of \eqref{eq:learning-discrete} and
$P_k := \nabla u_{k}(X_k^\ast)$ the associated discrete costate, with
terminal datum $P_D = \nabla_X G(X_D^\ast, y)$ and $G$ uniformly
Lipschitz with constant $\Lip(G)$. Then $P_k$ is generated by the
backward recursion
\begin{equation}
  P_k \;=\; \bigl(I + \Delta t\,\nabla_X v^\ast(X_k^\ast)\bigr)^{\!\top}\,P_{k+1},
  \qquad k = D-1, \dots, 0,
  \label{eq:discrete-costate-recursion}
\end{equation}
and its norm obeys
\begin{equation}
  \norm{P_k} \;\le\;
  \begin{cases}
    \Lip(G), & q = 0 \text{ (conserved costate)}, \\[4pt]
    \Lip(G)\,\bigl(1 + c\,\Delta t\,\norm{X^\ast}_{L^\infty([0,T])}^{\,q-1}\bigr)^{D-k}
    & q \in (0, 1) \text{ (conditional geometric in $D$)}, \\[4pt]
    \Lip(G)\,\bigl(1 + c\,\Delta t\bigr)^{D-k}, & q = 1 \text{ (geometric in $D$)},
  \end{cases}
  \label{eq:backward-optimal-bound}
\end{equation}
where $\norm{X^\ast}_{L^\infty([0,T])} := \sup_{k = 0, \dots, D} \norm{X_k^\ast}$
is finite by Theorem~\ref{thm:forward-bound}.
\end{theorem}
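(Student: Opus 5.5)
The proof has two parts: derive the backward recursion \eqref{eq:discrete-costate-recursion}, then bound the operator norm of each transfer factor $I + \Delta t\,\nabla_X v^\ast(X_k^\ast)$ using the closed form of $v^\ast$ from Theorem~\ref{thm:HJ-threshold}.

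\emph{Step 1 (recursion).} We differentiate the Bellman recursion \eqref{eq:bellman} along the optimal trajectory. Since $u_k(X) = \inf_{v \in \Uad^q(X)} u_{k+1}(X + \Delta t\, v)$ and the infimum is attained at $v_k^\ast(X)$, we have the identity $u_k(X) = u_{k+1}(X + \Delta t\, v_k^\ast(X))$ in a neighborhood of $X_k^\ast$. Where $u_{k+1}$ and $v_k^\ast$ are differentiable, the chain rule gives $\nabla u_k(X_k^\ast) = (I + \Delta t\,\nabla_X v_k^\ast(X_k^\ast))^{\top}\nabla u_{k+1}(X_{k+1}^\ast)$, which is \eqref{eq:discrete-costate-recursion}. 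This agrees with the discrete adjoint \eqref{eq:discrete-adjoint} evaluated at $v = v^\ast$. The Danskin identity \eqref{eq:danskin} is the continuous counterpart and justifies treating the costate gradient as frozen when differentiating the envelope. At $k = D$ we have $\norm{P_D} \le \Lip(G)$, which seeds the induction.

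\emph{Step 2 (per-step factor).} We write $v^\ast(X) = -\rho(X)\,n$, with $\rho(X) = c\norm{X}^q + b$ and $n = P/\norm{P}$ the unit costate direction. Differentiating at fixed costate, as in \eqref{eq:danskin}, gives $\nabla_X v^\ast = -c\,q\,\norm{X}^{q-2}\, n\, X^{\top}$. This is a rank-one matrix with operator norm $c\,q\,\norm{X}^{q-1}$.
\begin{itemize}
\item At $q = 0$ the Jacobian vanishes, so $P_k = P_{k+1}$ and the costate is conserved, giving the bound $\Lip(G)$.
\item At $q = 1$ the operator norm is at most $c$, so $\norm{P_k} \le (1 + c\Delta t)\norm{P_{k+1}}$.
\item For $q \in (0,1)$ the factor is $c\,q\,\norm{X_k^\ast}^{q-1} \le c\,\norm{X_k^\ast}^{q-1}$, using $q \le 1$.
\end{itemize}
Iterating the resulting one-step bounds from $D$ down to $k$ gives \eqref{eq:backward-optimal-bound}.

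\emph{Main obstacle.} The hard part is the $q \in (0,1)$ case. The map $r \mapsto r^{q-1}$ is decreasing, so the per-step factor is controlled by the \emph{smallest} norm along the trajectory, not the sup norm $\norm{X^\ast}_{L^\infty}$ that appears in the statement. The bound as written therefore needs either a reinterpretation of that norm or an extra assumption. I would first try to show that the trajectory norm is bounded below, for instance $\norm{X_k^\ast} \ge \norm{X_0}$ or a comparable quantity. Failing that, I would state the conditional bound with $\min_k \norm{X_k^\ast}$, or with a positive lower bound on it; this is what the label ``conditional'' in the statement suggests. In that case Theorem~\ref{thm:forward-bound} is invoked only to guarantee that every quantity involved is finite.

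A secondary issue is differentiability of $u_{k+1}$ and $v_k^\ast$ at the trajectory points. For $q > 0$ the field $v^\ast$ is singular at $X = 0$, so I would restrict to points where $X_k^\ast \ne 0$ and $\nabla u_{k+1} \ne 0$. As in the regularity paragraph preceding the theorem, I would invoke Rademacher's theorem to obtain a.e. differentiability and interpret $\norm{P_k}$ as bounded by the Lipschitz constant of $u_k$ near $X_k^\ast$.
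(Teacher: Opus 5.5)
Your route is essentially the paper's: differentiate $u_k(X)=u_{k+1}(X+\Delta t\,v_k^\ast(X))$ at $X_k^\ast$, use Danskin to evaluate the Jacobian at frozen costate, get the one-step factor $1+c\,q\,\Delta t\,\norm{X_k^\ast}^{q-1}$, and iterate. The cases $q=0$ and $q=1$ go through as you describe.

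The obstacle you flag for $q\in(0,1)$ is real, and the error lies in the paper rather than in your plan. The paper's last step bounds every factor $1+c\,q\,\Delta t\,\norm{X_j^\ast}^{q-1}$ by $1+c\,q\,\Delta t\,\norm{X^\ast}_{L^\infty([0,T])}^{q-1}$, which is backwards because $q-1<0$. What the argument actually yields is your fallback, $\norm{P_k}\le\Lip(G)\,(1+c\,q\,\Delta t\,m^{q-1})^{D-k}$ with $m:=\min_j\norm{X_j^\ast}>0$. The sup-norm version as stated is false. Take $D=1$, $q=1/2$, $c=\Delta t=1$, $b=0$ and $G(X)=-L\min(\norm{X},R)$ with $R\ge 1$; this $G$ is bounded and $L$-Lipschitz. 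Near a small $X_0$ one has $u_0(X)=-L(\norm{X}+\norm{X}^{1/2})$. At $\norm{X_0}=10^{-2}$ this gives $\norm{P_0}=L(1+\tfrac12\cdot 10)=6L$. Meanwhile $\norm{X_1^\ast}=0.11$, so the stated bound is $L(1+0.11^{-1/2})\approx 4.02L$. Do not invest in a lower bound such as $\norm{X_k^\ast}\ge\norm{X_0}$: the bang-bang velocity points against the costate, so it drives the state toward the origin whenever $G$ rewards small norm.

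One phrasing point in your Step~2: do not bound the operator norm of $\nabla_X v_k^\ast$. The true Jacobian of $X\mapsto-\rho(X)\,\nabla u_{k+1}(X)/\norm{\nabla u_{k+1}(X)}$ also differentiates the unit direction, which brings in the Hessian of $u_{k+1}$, and you have no control on its norm. What the envelope identity gives, and what the paper uses, is only the action on the costate, $(\nabla_X v_k^\ast)^{\top}P_{k+1}=-\nabla_X H_q(X_k^\ast,P_{k+1})$. Its norm is $c\,q\,\norm{X_k^\ast}^{q-1}\norm{P_{k+1}}$, which is all the one-step bound needs. A clean way to make the ``frozen costate'' rigorous is to write $v=\rho(X)\,w$ with $\norm{w}\le 1$, so the constraint set no longer depends on $X$. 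Danskin in $w$ then gives $P_k=P_{k+1}+\Delta t\,\inner{w^\ast}{P_{k+1}}\,\nabla\rho(X_k^\ast)$ directly, without ever differentiating $v_k^\ast$.
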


\begin{proof}
\emph{Step 1: exact backward recursion.} The Bellman recursion in Mayer
form \eqref{eq:bellman} is $u_k(X) = u_{k+1}(X + \Delta t\, v_k^\ast(X))$
with terminal condition $u_D = G$. Differentiating in $X$ at
$X = X_k^\ast$ and using $X_{k+1}^\ast = X_k^\ast + \Delta t\, v_k^\ast(X_k^\ast)$,
\begin{equation*}
  \nabla u_k(X_k^\ast)
  = \bigl(I + \Delta t\, \nabla_X v_k^\ast(X_k^\ast)\bigr)^{\!\top}
    \nabla u_{k+1}(X_{k+1}^\ast),
\end{equation*}
which is \eqref{eq:discrete-costate-recursion} under the definition
$P_k := \nabla u_k(X_k^\ast)$. At $k = D$, the terminal condition
$u_D = G$ gives
$P_D = \nabla u_D(X_D^\ast) = \nabla_X G(X_D^\ast, y)$ as stated. The
Jacobian $\nabla_X v_k^\ast$ exists a.e. along the OC-optimal trajectory
by the well-posedness of the OC problem
(Proposition~\ref{prop:HJB-wellposedness}).

\emph{Step 2: Danskin's envelope identity.} The OC-optimal velocity
$v_k^\ast$ maximizes the Hamiltonian pointwise on $\Uad^q(X_k^\ast)$
(Theorem~\ref{thm:HJ-threshold}). Danskin's envelope
identity \eqref{eq:danskin} therefore gives
\begin{equation}
  \nabla_X v_k^\ast(X_k^\ast)^{\!\top} P_{k+1}
  \;=\; -\nabla_X H_q(X_k^\ast, P_{k+1}),
  \label{eq:danskin-envelope}
\end{equation}
where $H_q(X, P) = (c\norm{X}^q + b)\norm{P}$ is the Hamiltonian of
\eqref{eq:H-generic}. The right-hand side of
\eqref{eq:danskin-envelope} is a closed-form expression in $X_k^\ast$
and $P_{k+1}$ that bypasses realization-level derivatives of $v_k^\ast$.

\emph{Step 3: costate-magnitude recursion.} Combining
\eqref{eq:discrete-costate-recursion} with \eqref{eq:danskin-envelope}
and taking norms,
\begin{equation*}
  \norm{P_k}
  \;\le\; \norm{P_{k+1}}
        + \Delta t\, \norm{\nabla_X H_q(X_k^\ast, P_{k+1})}.
\end{equation*}
From the closed form $H_q(X, P) = (c\norm{X}^q + b)\norm{P}$, we compute
$\nabla_X H_q(X, P) = c\,q\,\norm{X}^{q-2}\, X\, \norm{P}$ for $q > 0$, so
\begin{equation}
  \norm{\nabla_X H_q(X_k^\ast, P_{k+1})}
  \;\le\; c\, q\, \norm{X_k^\ast}^{q-1}\, \norm{P_{k+1}}.
  \label{eq:H-X-gradient-bound}
\end{equation}
At $q = 0$, $H_0(X, P) = b\norm{P}$ is $X$-independent, so
$\nabla_X H_0 \equiv 0$ and \eqref{eq:H-X-gradient-bound} holds
trivially with the right-hand side zero.

\emph{Step 4: discrete Grönwall.} Combining the last two displays,
\begin{equation}
  \norm{P_k}
  \;\le\; \bigl(1 + c\, q\, \Delta t\, \norm{X_k^\ast}^{q-1}\bigr)\, \norm{P_{k+1}}.
  \label{eq:costate-magnitude-recursion}
\end{equation}
Iterating backward from $k = D$ with $\norm{P_D} \le \Lip(G)$,
\begin{equation}
  \norm{P_k}
  \;\le\; \Lip(G)\,\prod_{j = k}^{D-1}
    \bigl(1 + c\, q\, \Delta t\, \norm{X_j^\ast}^{q-1}\bigr).
  \label{eq:costate-product-bound}
\end{equation}
Bounding each factor by
$1 + c\, q\, \Delta t\, \norm{X^\ast}_{L^\infty([0,T])}^{q-1}$, where the
discrete sup-norm along the OC-optimal trajectory is finite by
Theorem~\ref{thm:forward-bound}, gives the geometric bound
\eqref{eq:backward-optimal-bound} at $q \in (0, 1]$. At $q = 0$, \eqref{eq:H-X-gradient-bound} gives
$\norm{\nabla_X H_0} \equiv 0$, and
\eqref{eq:costate-magnitude-recursion} becomes
$\norm{P_k} \le \norm{P_{k+1}}$, which iterates to
$\norm{P_k} \le \Lip(G)$.
\end{proof}

\begin{remark}[Sufficient conditions on the terminal cost $G$]
Proposition~\ref{prop:HJB-wellposedness} requires $G$ bounded uniformly
continuous (BUC) for HJB well-posedness of the value function $u_y$;
Theorem~\ref{thm:backward-optimal} requires $G$ globally Lipschitz for
the norm bound on the discrete costate $P_k$. Both are sufficient
conditions. Squared loss satisfies neither globally, but is continuous
and locally Lipschitz on any bounded set. The forward state bound of
Theorem~\ref{thm:forward-bound} confines the trajectory to a ball
$B_R$, so $G$ may be replaced without loss by any globally BUC
extension of $G|_{B_R}$: Proposition~\ref{prop:HJB-wellposedness} then
applies to the extension, and Theorem~\ref{thm:backward-optimal}
applies with the local Lipschitz constant $\Lip(G|_{B_R})$.
\end{remark}

Theorem~\ref{thm:backward-optimal} establishes the discrete costate
as well-defined along the OC-optimal trajectory, with a Lipschitz bound
strictly sharper than \eqref{eq:fwd-bwd-jacobian-bound} at the optimum:  the
conserved-costate result at $q = 0$ and the exponential rate $c$ at
$q = 1$ are obtained with no realization-level Lipschitz hypothesis on
$v$. Off the optimum, only \eqref{eq:fwd-bwd-jacobian-bound} applies,
with constant $L$ supplied by the architecture.

\section{Residual architecture design via input-magnitude exponents}
\label{sec:algebra}

Throughout Sections~\ref{sec:formulation} and~\ref{sec:lg-analysis}, we
established the sublinear-growth condition $q \le 1$ on the velocity
fields of the discrete architecture --- equivalently, on its residual
blocks --- through complementary ODE-existence and optimal-control
arguments. Each block $v_k$ is a composite of smaller architectural
building blocks called \emph{architectural primitives} or simply \emph{primitives}, combined through operations
such as sequential composition, parallel sum, and gating; verifying
$q_k \le 1$ directly on the composite is impractical when a single
block can compose ten or more primitives. We therefore take a modular
approach: Section~\ref{sec:algebra-rules} introduces the primitive set
$\mathcal{P}$, the input-magnitude exponent $q(B)$ assigned to each
$B \in \mathcal{P}$, and five operations under which $\mathcal{P}$ is
closed and $q$ transforms predictably, so that block-level $q_k$
follows from per-primitive exponents (Table~\ref{tab:primitives})
propagated through these operations;
Section~\ref{sec:algebra-certification procedure} applies this machinery per-block and
between-block to certify the sequence of exponents $\set{q_k}$ across depth.

\subsection{Input-magnitude exponents under operations on primitives}
\label{sec:algebra-rules}

The construction applies within a single residual block: it computes
the input-magnitude exponent $q_k$ of the per-block velocity $v_k$ in
the residual block map \eqref{eq:discrete-dynamics} from the exponents
of its constituent primitives. A primitive is a continuous map $B$ between Euclidean spaces, with
input and output dimensions determined by the architectural context:
$\R^d \to \R^d$ for primitives that act on a single token's embedding
(activations, normalizations), $\R^{d \times n} \to \R^{d \times n}$
for primitives that act on the full sequence (self-attention, selective
state-space models), and $\R^d \to \R^{d'}$ for dimension-changing
projections (Q/K/V projections in attention, up- and down-projections
in an FFN). The polynomial-growth bound below applies uniformly via
the Euclidean norm on the appropriate space. 

\begin{definition}[Architectural primitives, with the right function space at $q \le 1$]
\label{def:primitive}
The set of \emph{primitives} is
\begin{equation}
   \mathcal{P} := \set{B : \R^{N_{\mathrm{in}}} \to \R^{N_{\mathrm{out}}} \,\big|\,
    B \text{ continuous, and } \exists\, q, c, b \ge 0
    \text{ with } \norm{B(x)} \le c\norm{x}^q + b
    \text{ for all } x \in \R^{N_{\mathrm{in}}}},
  \label{eq:primitive-set}
\end{equation}
where $N_{\mathrm{in}}, N_{\mathrm{out}}$ are the input and output
dimensions determined by the architectural context, and $\norm{\cdot}$
is the Euclidean norm on the corresponding space (Frobenius norm when
the dimension is a product $d \times n$). For $B \in \mathcal{P}$, the
\emph{input-magnitude exponent} $q(B)$ is the infimum of $q \ge 0$
for which constants $c, b \ge 0$ making the bound hold exist. The primitive
$B$ is \emph{sublinear-growth} if $q(B) \le 1$. We refer to
the set of sublinear-growth  primitives,
\begin{equation*}
  \mathcal{P}^{\le 1} := \set{B \in \mathcal{P} \,:\, q(B) \le 1},
\end{equation*}
as the \emph{sublinear-growth class}, or equivalently \emph{the right
function space}, for residual architecture design.
\end{definition}

\begin{theorem}[Primitive operations and their arithmetic of exponents]
\label{thm:composition}
Let $B_1, B_2 \in \mathcal{P}$ with input-magnitude exponents $q_1,
q_2$, coefficients $c_1, c_2$, and biases $b_1, b_2$, and with
compatible dimensions at the operation boundary. The set $\mathcal{P}$
is closed under the following five operations, and the input-magnitude
exponent of the result is given by:
\begin{enumerate}[label=\textnormal{(\roman*)},leftmargin=*,topsep=2pt,itemsep=2pt]
  \item \emph{Sequential composition.} $q(B_1 \circ B_2) \le q_1 q_2$,
  with asymptotic coefficient $c_1 c_2^{q_1}$ for large $\norm{x}$.
  \item \emph{Sum (parallel branches).} $q(B_1 + B_2) \le \max(q_1,
  q_2)$, with coefficient determined by the dominant exponent.
  \item \emph{Hadamard product (multiplicative coupling).}
  $q(B_1 \odot B_2) \le q_1 + q_2$ entrywise.
  \item \emph{Residual wrapping.} $q(I + \Delta t\, B) \le \max(1,
  q(B))$.
  \item \emph{Normalization wrapping.} $q(\LN \circ B) = 0$ regardless
  of $q(B)$: an outer LN annihilates input-magnitude scaling.
\end{enumerate}
\end{theorem}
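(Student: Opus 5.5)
The plan is to verify each of the five rules directly from the defining bound $\norm{B_i(x)} \le c_i\norm{x}^{q_i} + b_i$. Continuity of the result is immediate in every case, since sums, compositions and Hadamard products of continuous maps are continuous, and LN is continuous in its $\varepsilon$-regularized form. Closure of $\mathcal{P}$ therefore reduces to exhibiting some polynomial-growth bound for each result, and the exponent statements follow once the stated exponent admits such a bound. Since $q(B)$ is an infimum, I will work with any admissible exponent $q_i' > q_i$ (or $q_i$ itself when the infimum is attained) and then pass to the infimum at the end. Throughout I will use one elementary device: for $0 \le p \le r$ we have $\norm{x}^p \le \norm{x}^r + 1$, so a lower power can always be absorbed into a higher power plus a constant.

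\emph{Rules (ii)--(v).} For (ii), the triangle inequality gives $\norm{B_1+B_2} \le c_1\norm{x}^{q_1} + c_2\norm{x}^{q_2} + b_1 + b_2$. Applying the absorption device with $r = \max(q_1,q_2)$ yields coefficient $c_1 + c_2$ (or only the dominant $c_i$ when $q_1 \ne q_2$, at the cost of a larger bias). For (iii), the entrywise inequality $\norm{u\odot w}_2 \le \norm{u}_\infty\norm{w}_2 \le \norm{u}\,\norm{w}$ reduces the claim to bounding $(c_1\norm{x}^{q_1}+b_1)(c_2\norm{x}^{q_2}+b_2)$. After expanding, the cross terms have exponents at most $q_1+q_2$ and are absorbed by the same device. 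For (iv), $\norm{x + \Delta t B(x)} \le \norm{x} + \Delta t\, c\norm{x}^{q} + \Delta t\, b$, and rule (ii) applied to the identity (exponent $1$) gives $\max(1,q)$. For (v), the per-token normalized vector $(x-\mu)/\sqrt{\sigma^2+\varepsilon}$ has Euclidean norm at most $\sqrt{d}$, so $\norm{\LN(y)} \le \sqrt{d}\,\norm{\gamma}_\infty + \norm{\beta}$ per token (times $\sqrt{n}$ for sequences). This bound is uniform in $y$ and in $\varepsilon$, so $q(\LN\circ B) = 0$ for every $B$, with no condition on $B$ beyond continuity. Equality rather than inequality holds here because exponents are nonnegative by definition.

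\emph{Rule (i), the main obstacle.} Sequential composition is the only rule where the exponent enters nonlinearly. The bound becomes $\norm{B_1(B_2(x))} \le c_1\bigl(c_2\norm{x}^{q_2}+b_2\bigr)^{q_1} + b_1$, and two cases are needed to control $(a+\beta)^{q_1}$. When $q_1 \le 1$, subadditivity of $s\mapsto s^{q_1}$ gives $(a+\beta)^{q_1} \le a^{q_1} + \beta^{q_1}$, which produces exactly $c_1c_2^{q_1}\norm{x}^{q_1q_2}$ plus a constant. When $q_1 > 1$, I will use $(a+\beta)^{q_1} \le (1+\delta)^{q_1-1}a^{q_1} + C_\delta \beta^{q_1}$ via convexity. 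This gives exponent $q_1q_2$ with coefficient $c_1c_2^{q_1}(1+\delta)^{q_1-1}$ for every $\delta>0$, which is why the statement only claims $c_1c_2^{q_1}$ as the asymptotic coefficient for large $\norm{x}$: the ratio of the bound to $c_1c_2^{q_1}\norm{x}^{q_1q_2}$ tends to $1$. The one subtlety I expect to need care with is the passage to infima. Taking $q_i' \downarrow q_i$, the product $q_1'q_2' \downarrow q_1q_2$, so the infimum exponent of the composition is at most $q_1q_2$, although the constants may blow up along the way. This is harmless, because $q(\cdot)$ is defined as an infimum and only requires, for each admissible exponent, the existence of some constants.
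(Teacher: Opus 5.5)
Your proposal is correct and follows the same route as the paper: each rule is checked directly from the defining bound $\norm{B_i(x)} \le c_i\norm{x}^{q_i} + b_i$. That means the triangle inequality for (ii) and (iv), substituting one bound into the other for (i), an entrywise/$L^2$ estimate for (iii), and the uniform LN bound for (v); you also add care the paper omits, namely passing to the infimum via $q_i' \downarrow q_i$, the convexity estimate that makes ``asymptotic coefficient $c_1c_2^{q_1}$'' precise, the explicit $\norm{u\odot w} \le \norm{u}_\infty\norm{w}$ aggregation that handles the biases in (iii), and a self-contained LN bound in place of the citation to Kan et al. One small correction: your parenthetical in (ii) is false as written, because when $q_1 > q_2 > 0$ (and $c_2>0$) the term $c_2\norm{x}^{q_2}$ is unbounded and cannot be absorbed into a constant, so you only get coefficient $c_1 + \epsilon$ with an $\epsilon$-dependent bias (the same device as your $(1+\delta)$ factor in (i)); this does not affect the exponent claim.
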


\begin{proof}
(i)~$\norm{B_1(B_2(x))} \le c_1\norm{B_2(x)}^{q_1} + b_1
\le c_1(c_2\norm{x}^{q_2} + b_2)^{q_1} + b_1$; for large $\norm{x}$ the
leading term is $c_1 c_2^{q_1} \norm{x}^{q_1 q_2}$.
(ii)~$\norm{B_1(x) + B_2(x)} \le c_1\norm{x}^{q_1} + c_2\norm{x}^{q_2}
+ b_1 + b_2$; for large $\norm{x}$ the larger exponent dominates.
(iii)~Apply $\abs{(B_1 \odot B_2)(x)_i} \le c_1 c_2 \norm{x}^{q_1 +
q_2}$ entrywise, then aggregate to $L^2$.
(iv)~$\norm{(I + \Delta t B)(x)} \le \norm{x} + \Delta t(c\norm{x}^q +
b)$; for $q \le 1$ this is $O(\norm{x})$, for $q > 1$ it is
$O(\norm{x}^q)$.
(v)~Lemma~1 of~\cite{kan2025stability}:
$\norm{\LN(z)} \le \abs{\beta} + \norm{\gamma}\sqrt{d}$, independent of
$z$.
\end{proof}

The three operations that dominate practical design are (i), (ii), and
(iii). Sequential composition multiplies exponents, so chains of
$q = 1$ primitives remain at $q = 1$. Parallel addition takes the max,
so adding a $q = 0$ branch (skip connection) to a $q = 1$ branch leaves
the block at $q = 1$. Multiplicative coupling adds exponents, so gating
$x$ by a function of $x$ that itself has nonzero $q$ pushes the block
above the threshold; this is the mechanism by which selective
state-space models such as Mamba acquire supercritical $q$.

\paragraph{Examples of primitives.}
Table~\ref{tab:primitives} lists the input-magnitude exponent of
architectural primitives commonly used in machine learning, derived
either from Theorem~\ref{thm:composition} or by elementary
calculation. Three primitives anchor the catalogue: linear layers and
Lipschitz activations sit at $q = 1$ and are the building blocks of
standard transformers; layer normalization sits at $q = 0$ and is the
standard mechanism for collapsing $q$; the native selective state-space
model (Mamba) is the leading example of a supercritical primitive used
in practice.

\begin{table}[h]
\centering
\caption{Input-magnitude exponents of common architectural primitives. Coefficients are functions of the layer's parameters; biases are omitted. Composition with these primitives follows Theorem~\ref{thm:composition}. Self-attention and multi-head attention~\cite{vaswani2017attention} are multi-token primitives: with $V = W_V X$, $K = W_K X$, $Q = W_Q X$ all linear in $X$ and softmax saturating, the per-column output is controlled by the mixed norm $\norm{X}_{\infty, 2} = \max_k \norm{X_{:,k}}_2$ rather than by $\norm{X_{:,j}}_2$ alone, with $\norm{B(X)_{:,j}}_2 \le \norm{W_V}_\mathrm{op} \norm{X}_{\infty, 2}$. The $q = 1$ entry should be read in this worst-token sense; the implication $\norm{X}_{\infty, 2} \le \norm{X}_F$ then puts attention in $\Uad^1$ with a possibly looser Frobenius coefficient. The $q = 0$ entries are realized by two structurally distinct mechanisms (magnitude stripping, saturation).}
\label{tab:primitives}
\begin{tabular}{lll}
\toprule
\textbf{Primitive} & \textbf{$q$} & \textbf{Coefficient $c$} \\
\midrule
Saturating activation (tanh, sigmoid, ReLU6~\cite{howard2017mobilenets}) & $0$ & $M\sqrt{d}$ ($M$ = saturation level) \\
Dynamic Tanh $\mathrm{DyT}(x) = \gamma \cdot \tanh(\alpha x) + \beta$~\cite{zhu2025dyt} & $0$ & $\norm{\gamma}\sqrt{d} + \abs{\beta}$ \\
Bounded Hyperbolic Tanh (BHyT)~\cite{bhyt2026} & $0$ & data-driven bound \\
Softmax (row-stochastic, applied after a projection)~\cite{bridle1990softmax} & $0$ & $1$ \\
LayerNorm $\LN(x; \gamma, \beta)$~\cite{ba2016layernorm} & $0$ & $\abs{\beta} + \norm{\gamma}\sqrt{d}$ \\
RMSNorm $\gamma \cdot x / \mathrm{RMS}(x)$~\cite{zhang2019rmsnorm} & $0$ & $\norm{\gamma}\sqrt{d}$ \\
L2/Scale normalization $g \cdot x / \norm{x}$~\cite{nguyen2019scalenorm} & $0$ & $g$ \\
\midrule
Affine layer $x \mapsto Wx + b$ & $1$ & $\norm{W}_{\mathrm{op}}$ \\
Lipschitz activation (ReLU, GELU~\cite{hendrycks2016gelu}, SiLU~\cite{elfwing2018silu}) & $1$ & $\Lip(\sigma)$ \\
MLP $x \mapsto W_L\sigma_L(W_{L-1}\sigma_{L-1}(\cdots W_1 x))$ with Lipschitz $\sigma_\ell$ & $1$ & $\prod_\ell \Lip(\sigma_\ell)\norm{W_\ell}_{\mathrm{op}}$ \\
Self-attention $\mathrm{softmax}(Q^\top K / \sqrt{d_k})\,V$~\cite{vaswani2017attention} & $1$ & $\norm{W_V}_{\mathrm{op}}$ \\
Multi-head attention $\sum_h W_O^{(h)} \mathrm{Attn}^{(h)}(X)$~\cite{vaswani2017attention} & $1$ & $\sum_h \norm{W_O^{(h)}}_{\mathrm{op}}\norm{W_V^{(h)}}_{\mathrm{op}}$ \\
\midrule
Polynomial activation $x \mapsto x^p$ (entrywise, $p \ge 2$) & $p$ & $1$ \\
Selective SSM (Mamba, native)~\cite{gu2024mamba} & $5$ & parameter-dependent \\
\bottomrule
\end{tabular}
\end{table}

The $q = 0$ rows in Table~\ref{tab:primitives} are not equivalent at
the inductive-bias level, even though they share the algebraic slot.
LN and RMSNorm produce outputs on a fixed manifold (an ellipsoid or
sphere); saturating activations like tanh produce outputs in a bounded
hypercube.
All these mechanisms yield $\norm{B(x)} \le c$ uniformly in input
magnitude, but the geometric structure of the bounded output set
differs across them. Theorem~\ref{thm:composition} treats them as
interchangeable at the level of $q$; distinguishing them requires
geometric structure beyond the exponent.

\paragraph{The layer-normalization family.}
Four entries of the $q = 0$ block in Table~\ref{tab:primitives} ---
LayerNorm~\cite{ba2016layernorm}, RMSNorm~\cite{zhang2019rmsnorm},
ScaleNorm~\cite{nguyen2019scalenorm}, and L2 normalization --- form
the \emph{layer-normalization family}. They share a single algebraic
mechanism: division of the input by some statistic of itself (a norm,
an RMS, a standard deviation), which strips magnitude and places the
output on a fixed sphere or ellipsoid determined by the affine wrapper.
The four variants differ in which statistic divides and which affine
wrapper is applied, not in their input-magnitude exponent: all four
sit at $q = 0$ with coefficient determined by the wrapper parameters
alone (Table~\ref{tab:primitives}). Under the sublinear-growth
principle they are therefore a single design slot: any one of them
inserted into a residual block collapses the block exponent to
$q = 0$ by Theorem~\ref{thm:composition}(v), independently of where
inside the block it appears. The full $q = 0$ analysis of
Section~\ref{sec:lg-fwd-bwd} then transfers to the family wholesale:
the forward state stays uniformly bounded
(Theorem~\ref{thm:forward-bound}, $q = 0$ branch) and the OC-optimal
costate is Lipschitz with no depth-amplification factor
(Theorem~\ref{thm:backward-optimal}, $q = 0$ branch).

\begin{remark}[Scope of $\mathcal{P}$]
\label{rem:scope-P}
Table~\ref{tab:primitives} shows that $\mathcal{P}$ covers standard
architectural primitives used in machine learning across single-token,
sequence-level, and dimension-changing categories. The
polynomial-growth bound accommodates the entire range of practically
relevant growth rates --- bounded ($q = 0$), linear ($q = 1$), and
supercritical polynomial ($q > 1$) --- and the rules of
Theorem~\ref{thm:composition} apply uniformly across them.
Super-polynomial growth (exponential or worse) is rare in
machine-learning practice (standard activations are Lipschitz or
saturating; weight layers are linear); when such a mapping appears,
composition with a $q = 0$ growth controller (LN, RMSNorm, saturating
activation) returns the composite to $\mathcal{P}$ by
Theorem~\ref{thm:composition}(v). The continuity requirement in
Definition~\ref{def:primitive} excludes discontinuous primitives such
as vector quantization~\cite{vandenoord2017vqvae}; in practice these
are accompanied by a smooth surrogate (straight-through estimator,
Gumbel-softmax) at training time, and the effective primitive seen by
the velocity field is the surrogate, which is continuous and lies in
$\mathcal{P}$.
\end{remark}

\subsection{Per-block and between-block certification procedure}
\label{sec:algebra-certification procedure}

The analysis above suggests a two-scale certification procedure for residual
architecture design. Per block,
Theorem~\ref{thm:composition} and Table~\ref{tab:primitives}
certify $q_k \le 1$ from the constituent primitives, inheriting
the well-posedness and stability of
Sections~\ref{sec:formulation}--\ref{sec:lg-analysis}. Between blocks,
Remark~\ref{rem:depth-limits} motivates organizing the exponents
$\set{q_k}$ across depth into a mixed-exponent residual stack.

\paragraph{Per-block certification procedure.}
The primitive operations of Theorem~\ref{thm:composition} and
the catalogue of Table~\ref{tab:primitives} together yield a four-step
certification procedure for a candidate residual block:
\begin{enumerate}[leftmargin=*,topsep=2pt,itemsep=2pt]
  \item Decompose the candidate block into primitives and read their
  input-magnitude exponents from Table~\ref{tab:primitives}.
  \item Apply the arithmetic of exponents in Theorem~\ref{thm:composition}
  to compute the block-level $q$.
  \item If $q > 1$, identify the highest-$q$ primitive and apply one of
  two remedies: (a) wrap it with normalization
  (Theorem~\ref{thm:composition}(v) collapses to $q = 0$);
  (b) replace it with a Lipschitz substitute (drops to $q = 1$).
  \item Once $q \le 1$, the block satisfies the sublinear-growth bound
  \eqref{eq:intro-bound} with coefficient $c$ determined by the
  products and sums of the primitive coefficients, and inherits the
  well-posedness and stability guarantees of 
  Theorems~\ref{thm:forward-bound} and~\ref{thm:backward-optimal}.
\end{enumerate}

The certification procedure is conservative: the operations of
Theorem~\ref{thm:composition} give upper bounds on the block-level
$q$, so a block certified by this procedure satisfies the magnitude
bound, but the converse can fail---a block whose composed $q$ exceeds
$1$ may still admit a tighter direct bound. The mathematical content of certification at the block level is
therefore the existence of a primitive decomposition under which the
operations of Theorem~\ref{thm:composition} suffice; the design step
is identifying such a decomposition, with primitives realizing the
same $q$ through different inductive biases providing an orthogonal degree of
freedom.

\paragraph{Between-block certification procedure.}
A \emph{mixed-exponent architecture} has per-block exponent $q_k$
varying with block index $k$ within the admissible range
$q_k \in [0, 1]$. This freedom arises naturally when the residual
stack is heterogeneous: attention blocks, MLPs, gated state-space
modules, and normalization layers can have different block-level
exponents even when each is sublinear-growth admissible. The
between-block design step certifies that the sequence $\set{q_k}$
satisfies $\max_k q_k \le 1$, which is the condition for the per-block
certification procedure to apply at every depth.

Between-block design exposes two degrees of freedom that uniform-$q$
analysis collapses. First, the \emph{fraction of $q = 1$ blocks},
$|S_1|/D$ with $S_1 := \set{k : q_k = 1}$, controls forward
amplification of the state magnitude. Increasing the share of $q = 0$
blocks (e.g., inserting normalization layers between $q = 1$ blocks)
lowers this fraction. Second, the \emph{depth-averaged Lipschitz
constant} $\bar L := (1/D)\sum_{k=0}^{D-1} L_k$, where $L_k$ is the
operator-norm bound on $\nabla_X v_k$ supplied by block $k$, controls
backward amplification of the Jacobian. The two degrees of freedom are
chosen separately at design time, subject to $\max_k q_k \le 1$.

\begin{remark}[Forward and backward stability on mixed-exponent stacks]
\label{rem:mixed-fb-stability}
The per-block bounds of Theorem~\ref{thm:forward-bound} and
\eqref{eq:fwd-bwd-jacobian-bound} apply block-by-block to a
mixed-exponent stack with $q_k \in \set{0, 1}$ and per-block constants
$(c_k, b_k)$. For the forward state,
\begin{equation}
  \norm{X_D} \;\le\; \prod_{k \in S_1}(1 + c_k\Delta t)\,\norm{X_0}
    + \sum_{k=0}^{D-1} b_k\,\Delta t \prod_{\substack{j > k \\ j \in S_1}}
      (1 + c_j\Delta t),
  \label{eq:mixed-forward-bound}
\end{equation}
and for the cumulative Jacobian under per-block Lipschitz hypotheses
$\norm{\nabla_X v_k(X_k;\theta_k)}_2 \le L_k$,
\begin{equation}
  \norm{J_{i:D}}_2 \;\le\; \prod_{k=i+1}^{D}(1 + L_k\Delta t)
    \;\le\; \exp\!\Bigl(\sum_{k=i+1}^{D} L_k\,\Delta t\Bigr).
  \label{eq:mixed-backward-bound}
\end{equation}
Under residual scaling $\Delta t = T/D$, the forward multiplicative
factor reduces to $\exp(\bar c\, T \cdot |S_1|/D)$ with
$\bar c := (1/|S_1|)\sum_{k \in S_1} c_k$, and the backward factor
reduces to $\exp(\bar L\, T)$. Both bounds are independent of depth
$D$ under residual scaling, with depth-dependence absorbed into the
two architectural parameters $|S_1|/D$ and $\bar L$ identified above.
The bounds use the additive residual update
\eqref{eq:discrete-dynamics}; architectures whose block map departs
from this update---canonically Post-LN, $X_{k+1} = \LN(X_k + \Delta t\,
f(X_k))$, where an outer normalization resets the state at each
layer---lie outside the present analysis.
\end{remark}

\begin{remark}[Strict interior $q \in (0, 1)$ as an underexplored design space]
\label{rem:strict-interior}
The admissible range $q_k \in [0, 1]$ contains a strict interior
$q \in (0, 1)$ that no entry in Table~\ref{tab:primitives} occupies:
the catalogue's $q \le 1$ primitives sit at the boundary cases
$q = 0$ (saturating activations, normalizations) and $q = 1$ (linear
layers, Lipschitz activations, attention, FFN). The forward bound
\eqref{eq:forward-bound} at $q \in (0, 1)$ is polynomial
in $T$ rather than exponential, strictly sharper than the $q = 1$
bound; an architectural primitive at strict interior $q$ would inherit
this sharper bound. No standard primitive currently realizes this
regime, and the question of whether a useful $q \in (0, 1)$ primitive
exists --- one that interpolates between the boundedness of $q = 0$
and the magnitude-preservation of $q = 1$ without sacrificing
expressivity --- is an open design question.
\end{remark}

\section{Experiments}
\label{sec:validation}

The theoretical analysis of Sections~\ref{sec:formulation}--\ref{sec:lg-analysis} identifies the input-magnitude exponent $q$ as the criterion for training stability: every block with $q \le 1$ admits a globally defined forward flow and OC-well-posed backward dynamics, while supercritical admissible classes ($q > 1$) contain velocity fields that blow up in finite time. This criterion is independent of how a block reaches $q \le 1$ --- whether by composition with a normalization layer or by structural design. We test the prediction empirically in this section.

\subsection{Time-series forecasting (TSF) at shallow depths}
\label{sec:validation-tsf}

Modern time-series forecasting (TSF) models share an architectural template: a residual stack of identical blocks, each composed from a small set of primitives --- a normalization layer (RMSNorm), a sequence backbone (selective state-space or self-attention) that carries the task-specific inductive bias, and a position-wise MLP. Both backbones we evaluate adopt this template with RMSNorm as the conventional normalization choice. We use the standard TSF benchmark setting: Weather~\cite{wu2021autoformer} and ETTm1~\cite{zhou2021informer}, input window length $96$, prediction horizons $H \in \{96, 192, 336, 720\}$, residual depths $L \in \{3, 8\}$, and three seeds per Weather configuration and six per ETTm1 configuration, in the small-model regime ($\le 2$M parameters).

The two backbones differ in the input-magnitude exponent of the un-normalized block. Mamba~\cite{gu2024mamba, cmamba2024} uses a selective state-space operator whose multiplicative coupling between the input and the discretized state-transition matrix drives the block-level exponent to $q = 5$ (Table~\ref{tab:primitives}; Figure~\ref{fig:mamba-graph}, left panel) --- supercritical when no normalization is present. PatchTST~\cite{nie2023patchtst} is based on self-attention; its primitives are all $q = 1$ by Table~\ref{tab:primitives}, so the block sits at $q = 1$ even with the RMSNorm wrapper removed. For Mamba we evaluate five variants spanning the three regimes $q = 0$, $q = 1$, and supercritical $q = 5$: the conventional RMSNorm baseline, Pre-LN, and Peri-LN (two normalization choices that each collapse the block to $q = 0$); a structural linear-growth modification (described below) that reduces the block to $q = 1$ without any normalization layer; and a no-normalization control at the native $q = 5$. The no-normalization control is called \emph{free velocity} because removing the normalization leaves the block outside any sublinear-growth class, no longer bounded to $q \le 1$. For PatchTST we evaluate the same four normalization choices (RMSNorm baseline, Pre-LN, Peri-LN, free velocity); the linear-growth modification does not apply since the block is already $q = 1$ by construction. 

The Mamba block provides a concrete demonstration of how the
arithmetic of Theorem~\ref{thm:composition} certifies (or fails to
certify) a block-level exponent, primitive by primitive. The selective state-space operator updates a state $h_t \in \R^d$ along
the token axis via
\begin{equation}
  h_t = \bar A_t(x_t)\, h_{t-1} + \bar B_t(x_t)\, x_t,
  \qquad
  y_t = C_t(x_t)\, h_t + D_t(x_t)\, x_t,
  \label{eq:mamba-update}
\end{equation}
with selective parameters $\bar A_t, \bar B_t, C_t, D_t$ computed from
$x_t$ through linear projections, SiLU gating, and discretization.
Under the standing Hurwitz assumption on the state matrix $A$, the
discretized transition $\bar A_t = \exp(\Delta_t A)$ is a contraction
($\norm{\bar A_t} \le 1$, $q = 0$); without this assumption
$\bar A_t$ is not polynomially bounded and the block falls outside
$\mathcal{P}$. Applying Theorem~\ref{thm:composition} to
\eqref{eq:mamba-update} then accumulates the exponent as follows:
$\Delta_t, B_t, C_t$ are each linear in $x_t$ ($q = 1$);
$\bar B_t = \Delta_t B_t$ is a Hadamard product of two $q = 1$ factors,
so Theorem~\ref{thm:composition}(iii) gives $q(\bar B_t) = 2$; the
recurrence contribution $\bar B_t \odot x_t$ raises the propagated
state to $q = 3$; the output projection $y_t = C_t h_t$ multiplies a
$q = 1$ factor with the $q = 3$ state, giving $q = 4$; and the
SiLU-gated output $\mathrm{SiLU}(z_t) \odot y_t$ multiplies a $q = 1$
gate with the $q = 4$ output, producing the block-level exponent
$q = 5$ (Table~\ref{tab:primitives}, Figure~\ref{fig:mamba-graph},
left panel).

The linear-growth modification (Figure~\ref{fig:mamba-graph}, right
panel) targets two contributions in this accumulation. First, the
input $x_t$ to the selective-parameter branch is replaced with its
$L^2$-normalized version $x_t / \norm{x_t}$, capping the parameter
path at $q = 0$. Second, the SiLU gate $x \mapsto x \cdot \sigma(x)$
is replaced with sigmoid gating $x \mapsto \sigma(x)$, removing a
$q = 2$ contribution. The residual stream is untouched and inherits
$q = 1$ from the linear contribution. The composite block thereby
reaches $q = 1$ without any normalization layer.

\begin{figure}[t]
\centering
\resizebox{\textwidth}{!}{%
\begin{tikzpicture}[
  node distance=8mm and 9mm,
  every node/.style={font=\small},
  op/.style={draw, rounded corners=2pt, minimum width=12mm, minimum height=6mm, align=center, font=\footnotesize},
  data/.style={font=\footnotesize\itshape},
  qtag/.style={font=\scriptsize, color=red!70!black},
  qtagok/.style={font=\scriptsize, color=blue!60!black},
  flow/.style={->, >=stealth, thin},
  hi/.style={op, draw=blue!70!black, fill=blue!8, line width=0.6pt},
]

\begin{scope}[local bounding box=native]
  \node[data] (x_n) at (0,0) {$x_t$};
  \node[op, right=6mm of x_n] (proj_n) {Linear};
  \node[op, right=of proj_n] (silu_n) {SiLU};
  \node[op, right=of silu_n] (disc_n) {discretize $\bar A_t, \bar B_t, C_t, D_t$};

  \draw[flow] (x_n) -- (proj_n);
  \draw[flow] (proj_n) -- (silu_n);
  \draw[flow] (silu_n) -- (disc_n);

  \node[op, below=10mm of disc_n] (ssm_n) {SSM update \eqref{eq:mamba-update}};
  \node[data, left=22mm of ssm_n] (xin_n) {$x_t$ (residual stream)};
  \node[data, right=8mm of ssm_n] (y_n) {$y_t$};
  \node[op, right=8mm of y_n] (mlp_n) {MLP};
  \node[data, right=4mm of mlp_n] (v_n) {$v_t$};

  \draw[flow] (disc_n) -- (ssm_n);
  \draw[flow] (xin_n) -- (ssm_n);
  \draw[flow] (ssm_n) -- (y_n);
  \draw[flow] (y_n) -- (mlp_n);
  \draw[flow] (mlp_n) -- (v_n);

  \node[qtag, above=0mm of silu_n] {$q{=}2$};
  \node[qtag, above=0mm of disc_n] {couples $\Rightarrow q{\uparrow}$};
  \node[qtag, below=0mm of mlp_n] {block $q{=}5$};

  \node[font=\bfseries, above=8mm of proj_n] (titleN) {Without normalization};
\end{scope}

\begin{scope}[xshift=110mm, local bounding box=mod]
  \node[data] (x_m) at (0,0) {$x_t$};
  \node[hi, right=4mm of x_m] (norm_m) {$x_t/\|x_t\|$};
  \node[op, right=of norm_m] (proj_m) {Linear};
  \node[hi, right=of proj_m] (sig_m) {sigmoid};
  \node[op, right=of sig_m] (disc_m) {discretize $\bar A_t, \bar B_t, C_t, D_t$};

  \draw[flow] (x_m) -- (norm_m);
  \draw[flow] (norm_m) -- (proj_m);
  \draw[flow] (proj_m) -- (sig_m);
  \draw[flow] (sig_m) -- (disc_m);

  \node[op, below=10mm of disc_m] (ssm_m) {SSM update \eqref{eq:mamba-update}};
  \node[data, left=22mm of ssm_m] (xin_m) {$x_t$ (residual stream)};
  \node[data, right=8mm of ssm_m] (y_m) {$y_t$};
  \node[op, right=8mm of y_m] (mlp_m) {MLP};
  \node[data, right=4mm of mlp_m] (v_m) {$v_t$};

  \draw[flow] (disc_m) -- (ssm_m);
  \draw[flow] (xin_m) -- (ssm_m);
  \draw[flow] (ssm_m) -- (y_m);
  \draw[flow] (y_m) -- (mlp_m);
  \draw[flow] (mlp_m) -- (v_m);

  \node[qtagok, above=0mm of norm_m] {$q{=}0$};
  \node[qtagok, above=0mm of sig_m] {$q{=}0$};
  \node[qtagok, above=0mm of disc_m] {param path $q{=}0$};
  \node[qtagok, below=0mm of mlp_m] {block $q{=}1$};

  \node[font=\bfseries, above=8mm of proj_m] (titleM) {Linear-growth modification};
\end{scope}

\end{tikzpicture}%
}
\caption{Computational graphs of the Mamba velocity field. Left: the Mamba block with the normalization removed (the free-velocity variant); SiLU gating and multiplicative coupling between $x_t$ and the discretized selective parameters drive the block-level exponent to $q = 5$. Right: the linear-growth modification, with $L^2$-normalization of the input on the selective-parameter path and sigmoid gating in place of SiLU (blue boxes); both modifications are localized to the parameter-computation branch and reduce the block to $q = 1$ without any normalization layer in the residual stream.}
\label{fig:mamba-graph}
\end{figure}

We report two types of result. The first, in Table~\ref{tab:blowups},
is the blow-up count per configuration cell. A run is a blow-up if its
validation loss is nonfinite (NaN or $\pm\infty$) by the end of
training, or if it terminates with a finite MSE many orders of
magnitude above the target's dynamic range. For Mamba, the four $q \le 1$ variants (RMSNorm baseline, Pre-LN, Peri-LN, and the linear-growth modification) have no blow-ups in any of the $288$ seed-level runs across their $64$ configuration cells. The free-velocity Mamba ($q = 5$) diverges in $48$ of its $72$ runs, with the failure rate rising sharply with depth: $12/24$ on ETTm1 and $2/12$ on Weather at $L = 3$; $24/24$ on ETTm1 and $10/12$ on Weather at $L = 8$. For PatchTST, all four variants are completely stable across all $288$ seed-level runs in their $64$ cells, including the free-velocity variant, which sits at $q = 1$ by primitive composition (Table~\ref{tab:primitives}) rather than at the supercritical exponent that the Mamba free-velocity variant exhibits.

\begin{table}[t]
\centering
\small
\caption{Blow-up counts per configuration cell on ETTm1 ($6$ seeds per cell) and Weather ($3$ seeds per cell). The ``$\Sigma/24$'' column for ETTm1 aggregates over the four prediction horizons (out of $24 = 6 \times 4$); the ``$\Sigma/12$'' column for Weather aggregates correspondingly (out of $12 = 3 \times 4$). The four $q \le 1$ Mamba variants are completely stable across all $288$ runs in their $64$ cells; all Mamba blow-ups occur in the free-velocity variant. All four PatchTST variants---including free velocity at $q = 1$---are completely stable across all $288$ PatchTST runs. Two Mamba ETTm1 free-velocity runs at $L = 3$, horizon $192$ (seeds
$2024$ and $2025$) terminated with finite MSE values exceeding
$10^{12}$ --- ten orders of magnitude above the standardized target
scale --- and are counted as blow-ups.}
\label{tab:blowups}
\setlength{\tabcolsep}{4pt}
\begin{tabular}{llcccccc@{\hspace{0.6em}}c@{\hspace{0.8em}}ccccc}
\toprule
& & & & \multicolumn{5}{c}{\textbf{ETTm1}} & \multicolumn{5}{c}{\textbf{Weather}} \\
\cmidrule(lr){5-9}\cmidrule(lr){10-14}
\textbf{Model} & \textbf{Variant} & \textbf{$q$} & \textbf{$L$} & $96$ & $192$ & $336$ & $720$ & $\Sigma/24$ & $96$ & $192$ & $336$ & $720$ & $\Sigma/12$ \\
\midrule
Mamba    & free velocity        & $5$ & 3 & 2 & 3 & 4 & 3 & 12 & 0 & 1 & 0 & 1 &  2 \\
         &                       &         & 8 & 6 & 6 & 6 & 6 & 24 & 1 & 3 & 3 & 3 & 10 \\
         & baseline (RMSNorm)    & $0$     & 3 & 0 & 0 & 0 & 0 &  0 & 0 & 0 & 0 & 0 &  0 \\
         &                       &         & 8 & 0 & 0 & 0 & 0 &  0 & 0 & 0 & 0 & 0 &  0 \\
         & Pre-LN                & $0$     & 3 & 0 & 0 & 0 & 0 &  0 & 0 & 0 & 0 & 0 &  0 \\
         &                       &         & 8 & 0 & 0 & 0 & 0 &  0 & 0 & 0 & 0 & 0 &  0 \\
         & Peri-LN               & $0$     & 3 & 0 & 0 & 0 & 0 &  0 & 0 & 0 & 0 & 0 &  0 \\
         &                       &         & 8 & 0 & 0 & 0 & 0 &  0 & 0 & 0 & 0 & 0 &  0 \\
         & linear growth         & $1$     & 3 & 0 & 0 & 0 & 0 &  0 & 0 & 0 & 0 & 0 &  0 \\
         &                       &         & 8 & 0 & 0 & 0 & 0 &  0 & 0 & 0 & 0 & 0 &  0 \\
\midrule
PatchTST & free velocity         & $1$     & 3 & 0 & 0 & 0 & 0 &  0 & 0 & 0 & 0 & 0 &  0 \\
         &                       &         & 8 & 0 & 0 & 0 & 0 &  0 & 0 & 0 & 0 & 0 &  0 \\
         & baseline (RMSNorm)    & $0$     & 3 & 0 & 0 & 0 & 0 &  0 & 0 & 0 & 0 & 0 &  0 \\
         &                       &         & 8 & 0 & 0 & 0 & 0 &  0 & 0 & 0 & 0 & 0 &  0 \\
         & Pre-LN                & $0$     & 3 & 0 & 0 & 0 & 0 &  0 & 0 & 0 & 0 & 0 &  0 \\
         &                       &         & 8 & 0 & 0 & 0 & 0 &  0 & 0 & 0 & 0 & 0 &  0 \\
         & Peri-LN               & $0$     & 3 & 0 & 0 & 0 & 0 &  0 & 0 & 0 & 0 & 0 &  0 \\
         &                       &         & 8 & 0 & 0 & 0 & 0 &  0 & 0 & 0 & 0 & 0 &  0 \\
\bottomrule
\end{tabular}
\end{table}
The second result is forecasting accuracy. Figure~\ref{fig:competing-mse} reports MSE for each Mamba variant, and Figure~\ref{fig:competing-mse-patchtst} reports the corresponding result for PatchTST. Each value is the signed percentage deviation of a variant's mean MSE from the per-cell variant median, computed on the surviving (non-blow-up) seeds. For Mamba, the four $q \le 1$ variants cluster within $\le 1.0\%$ of the median on Weather and $\le 1.5\%$ on ETTm1, with no variant dominating across cells; within this class, the linear-growth modification is neither systematically better nor worse than the two normalized variants at a margin that exceeds within-cell sampling variability. The free-velocity Mamba, on cells where at least one seed survives, is uniformly above the median, by as much as $4.6\%$ MSE on the smallest Weather configuration. For PatchTST, the spread across all four variants is much tighter: within $\le 0.05\%$ on Weather, where the four variants are effectively indistinguishable at the plot resolution, and $\le 0.8\%$ on ETTm1, with the free-velocity variant matching the two normalized variants in both stability and accuracy.

\begin{figure}[!htbp]
\centering
\includegraphics[width=\textwidth]{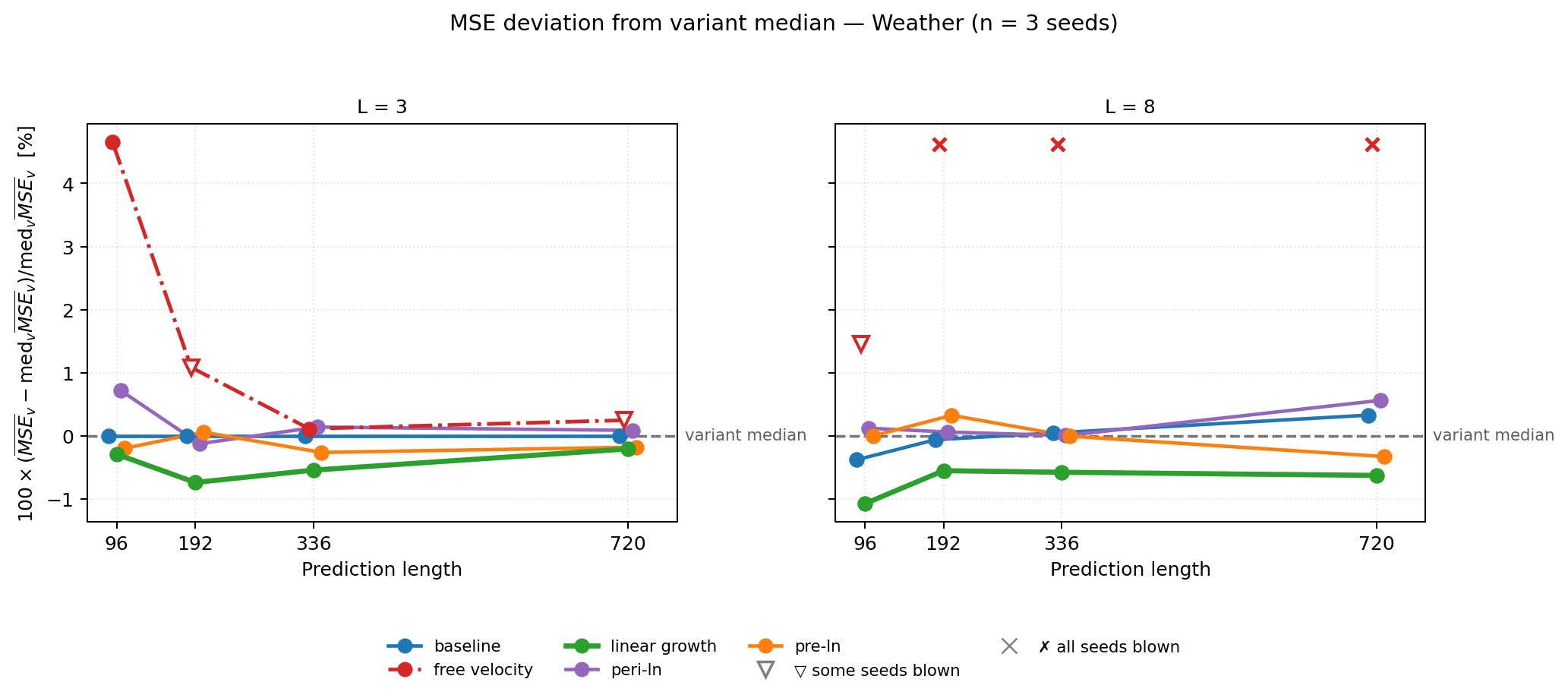}\\[0.4em]
\includegraphics[width=\textwidth]{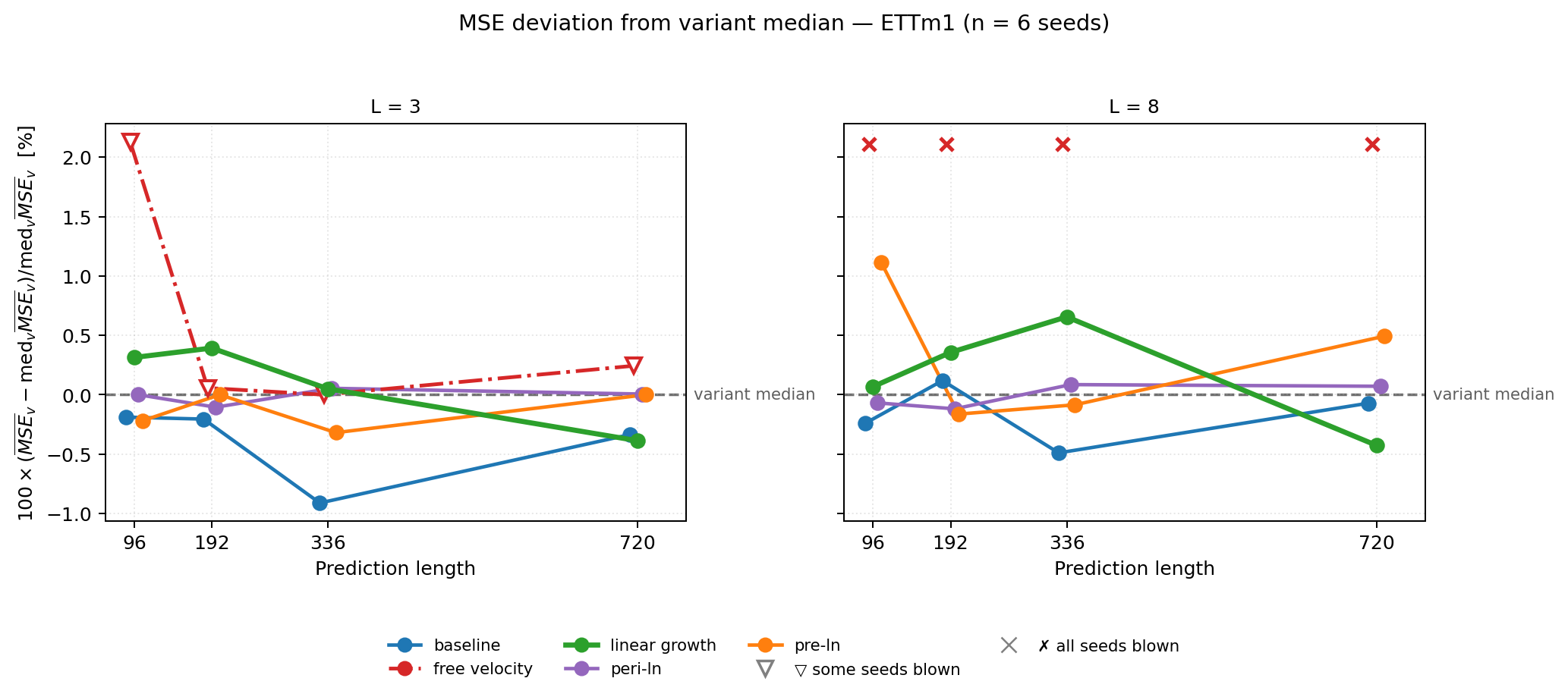}
\caption{Forecasting MSE on Weather (top) and ETTm1 (bottom) for the five Mamba variants. Each value is the signed percentage deviation of a variant's mean MSE from the per-cell median across variants ($0\% =$ median; negative $=$ lower error). The ``baseline'' (blue) is the conventional Mamba $+$ RMSNorm configuration. The four $q \le 1$ variants cluster within $\le 1.5\%$ of the median; the free-velocity Mamba (red) is marked $\nabla$ at cells with at least one blown-up seed and $\times$ at cells where all seeds blew up (three on Weather, six on ETTm1).}
\label{fig:competing-mse}
\end{figure}

\begin{figure}[!htbp]
\centering
\includegraphics[width=\textwidth]{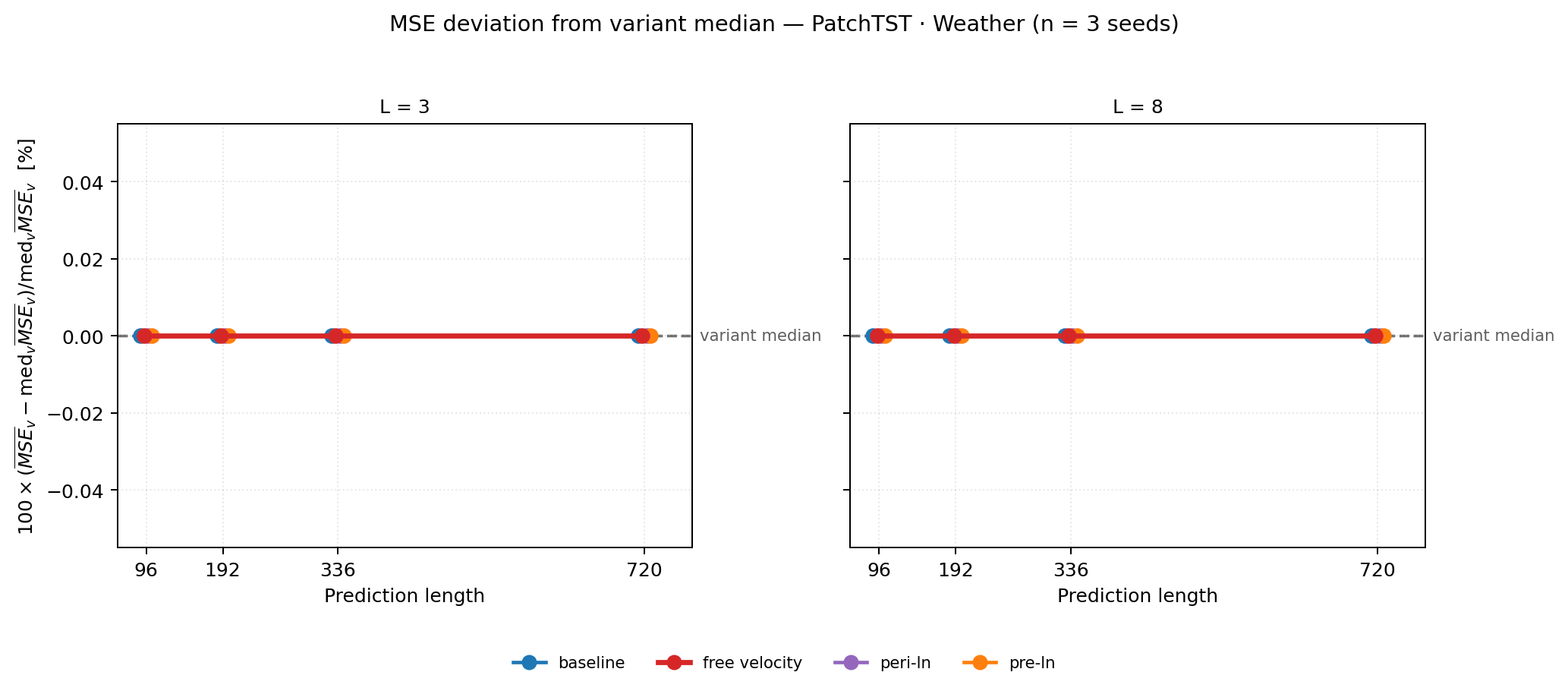}\\[0.4em]
\includegraphics[width=\textwidth]{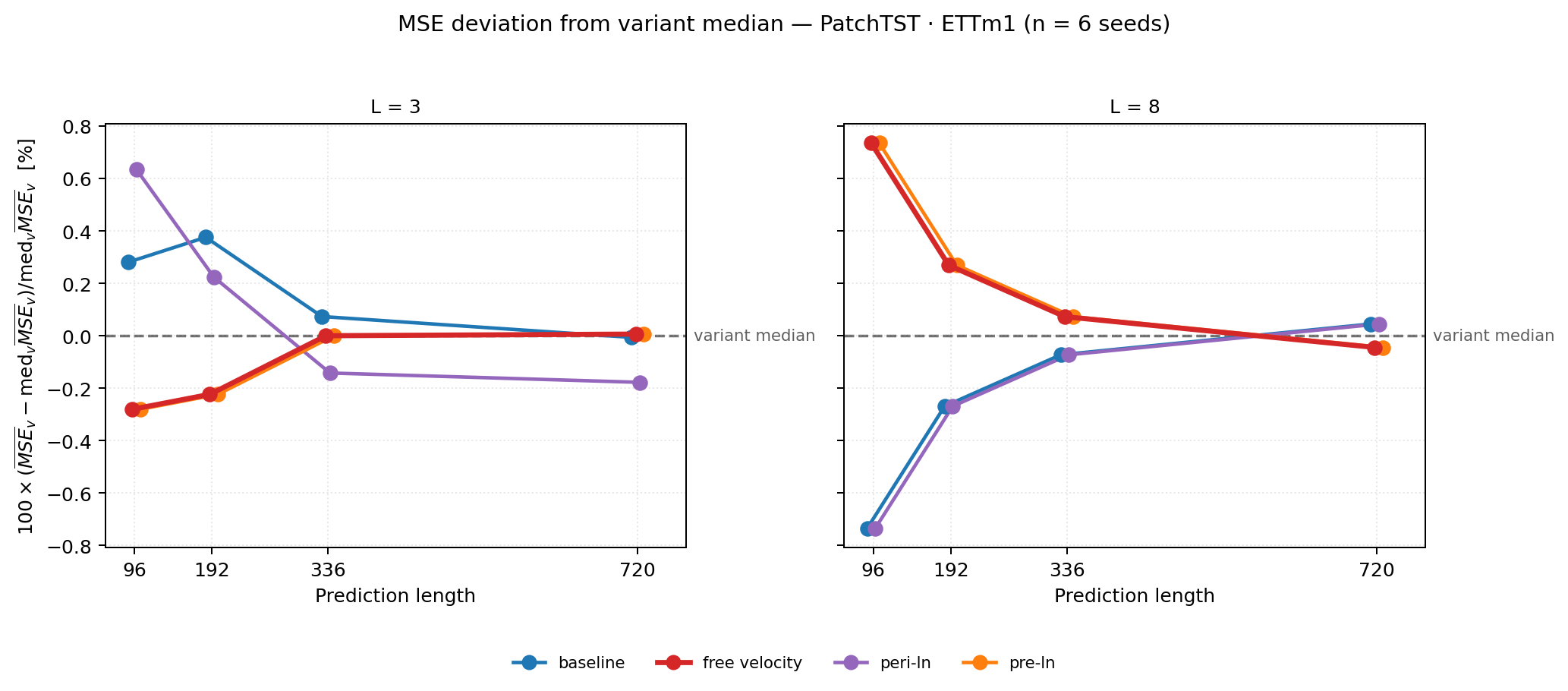}
\caption{Forecasting MSE on Weather (top) and ETTm1 (bottom) for the four PatchTST variants. Conventions follow Figure~\ref{fig:competing-mse}. The ``baseline'' (blue) is the conventional PatchTST $+$ RMSNorm configuration; the free-velocity variant (red) sits at $q = 1$ by primitive composition (Table~\ref{tab:primitives}) rather than at a supercritical exponent. All four variants are stable on every cell, and the spread of MSE across variants stays within $\le 0.05\%$ of the median on Weather and $\le 0.8\%$ on ETTm1.}
\label{fig:competing-mse-patchtst}
\end{figure}

These observations match the sublinear-growth principle's central prediction in the shallow setting: $q \le 1$ blocks train stably and $q > 1$ blocks blow up, as Section~\ref{sec:ode-classical} and Theorem~\ref{thm:forward-bound} guarantee. The criterion is the input-magnitude exponent of the block, not the presence or placement of a normalization layer. Two manipulations on the two backbones support this. Removing the normalization from Mamba leaves the block supercritical (Table~\ref{tab:primitives}; Figure~\ref{fig:mamba-graph}, left) and the free-velocity variant diverges, while the linear-growth modification on the same backbone restores $q = 1$ without any normalization layer and trains stably. Removing the normalization from PatchTST leaves the block at $q = 1$ by primitive composition rather than supercritical, so the principle predicts stability with or without normalization --- which the PatchTST free-velocity runs confirm. The two backbones together dissociate the input-magnitude exponent from the presence of a normalization layer: Mamba isolates the case where removing normalization admits blow-up, and PatchTST isolates the case where it does not.

Together these results validate the stability prediction of Sections~\ref{sec:formulation}--\ref{sec:lg-analysis} in the shallow TSF setting. 

\bibliography{Arxiv_2026/ref}
\bibliographystyle{abbrv}

\newpage
 


\end{document}